\def\eqref#1{equation~\ref{#1}}
\def\1{\bm{1}}
\def\R{{R}}
\def\va{{\bm{a}}}
\def\vb{{\bm{b}}}
\def\vc{{\bm{c}}}
\def\ve{{\bm{e}}}
\def\vg{{\bm{g}}}
\def\vh{{\bm{h}}}
\def\vm{{\bm{m}}}
\def\vo{{\bm{o}}}
\def\vp{{\bm{p}}}
\def\vs{{\bm{s}}}
\def\vt{{\bm{t}}}
\def\vu{{\bm{u}}}
\def\vv{{\bm{v}}}
\def\vx{{\bm{x}}}
\def\mI{{\bm{I}}}
\def\mM{{\bm{M}}}
\def\mO{{\bm{O}}}
\def\mW{{\bm{W}}}
\def\mZ{{\bm{Z}}}
\DeclareMathAlphabet{\mathsfit}{\encodingdefault}{\sfdefault}{m}{sl}
\SetMathAlphabet{\mathsfit}{bold}{\encodingdefault}{\sfdefault}{bx}{n}
\def\gA{{\mathcal{A}}}
\def\gE{{\mathcal{E}}}
\def\gG{{\mathcal{G}}}
\def\gM{{\mathcal{M}}}
\def\gN{{\mathcal{N}}}
\def\gS{{\mathcal{S}}}
\def\gV{{\mathcal{V}}}
\def\sR{{\mathbb{R}}}
\DeclareMathOperator*{\argmin}{arg\,min}
\theoremstyle{plain}
\newtheorem{theorem}{Theorem}[section]
\newtheorem{corollary}[theorem]{Corollary}
\theoremstyle{definition}
\newtheorem{definition}[theorem]{Definition}
\theoremstyle{remark}
\icmltitlerunning{Subequivariant Reinforcement Learning in 3D Multi-Entity Physical Environments}
\begin{document}

\twocolumn[
\icmltitle{Subequivariant Reinforcement Learning\\in 3D Multi-Entity Physical Environments}



\icmlsetsymbol{equal}{*}

\begin{icmlauthorlist}
\icmlauthor{Runfa Chen}{equal,thu}
\icmlauthor{Ling Wang}{equal,xian}
\icmlauthor{Yu Du}{naval,thu}
\icmlauthor{Tianrui Xue}{nyu}
\icmlauthor{Fuchun Sun}{thu,bosch}
\icmlauthor{Jianwei Zhang}{hbu}
\icmlauthor{Wenbing Huang}{ruc,bdmam}
\end{icmlauthorlist}

\icmlaffiliation{thu}{Dept. of Comp. Sci. \& Tech. , Institute for AI, BNRist Center, Tsinghua University}
\icmlaffiliation{bosch}{THU-Bosch JCML Center} 
\icmlaffiliation{xian}{Dept. of Info. Eng., Xi’an Research Institute of High-Tech}
\icmlaffiliation{naval}{School of Elec. Eng., Naval University of Engineering}
\icmlaffiliation{ruc}{Gaoling School of Artificial Intelligence, Renmin University of China} 
\icmlaffiliation{bdmam}{Beijing Key Laboratory of Big Data Management and Analysis Methods}
\icmlaffiliation{nyu}{College of Arts \& Sci., New York Unviersity}  
\icmlaffiliation{hbu}{TAMS, Dept. of Informatics, University of Hamburg} 

\icmlcorrespondingauthor{Fuchun Sun}{fcsun@mail.tsinghua.edu.cn}

\icmlkeywords{Machine Learning, ICML}

\vskip 0.3in
]



\printAffiliationsAndNotice{\icmlEqualContribution} 

\begin{abstract}
Learning policies for multi-entity systems in 3D environments is far more complicated against single-entity scenarios, due to the exponential expansion of the global state space as the number of entities increases.  
One potential solution of alleviating the exponential complexity is dividing the global space into independent local views that are invariant to transformations including translations and rotations. 
To this end, this paper proposes \emph{Subequivariant Hierarchical Neural Networks} (SHNN) to facilitate multi-entity policy learning. In particular, SHNN first dynamically decouples the global space into local entity-level graphs via task assignment. 
Second, it leverages subequivariant message passing over the local entity-level graphs to devise local reference frames, remarkably compressing the representation redundancy, particularly in gravity-affected environments.
Furthermore, to overcome the limitations of existing benchmarks in capturing the subtleties of multi-entity systems under the Euclidean symmetry, we propose the \emph{Multi-entity Benchmark} (\textsc{MeBen}), a new suite of environments tailored for exploring a wide range of multi-entity reinforcement learning. 
Extensive experiments demonstrate significant advancements of SHNN  on the proposed benchmarks compared to existing methods. 
Comprehensive ablations are conducted to verify the indispensability of task assignment and subequivariance.
\end{abstract}

\begin{figure}[t!]
\centering
\includegraphics[width=\linewidth]{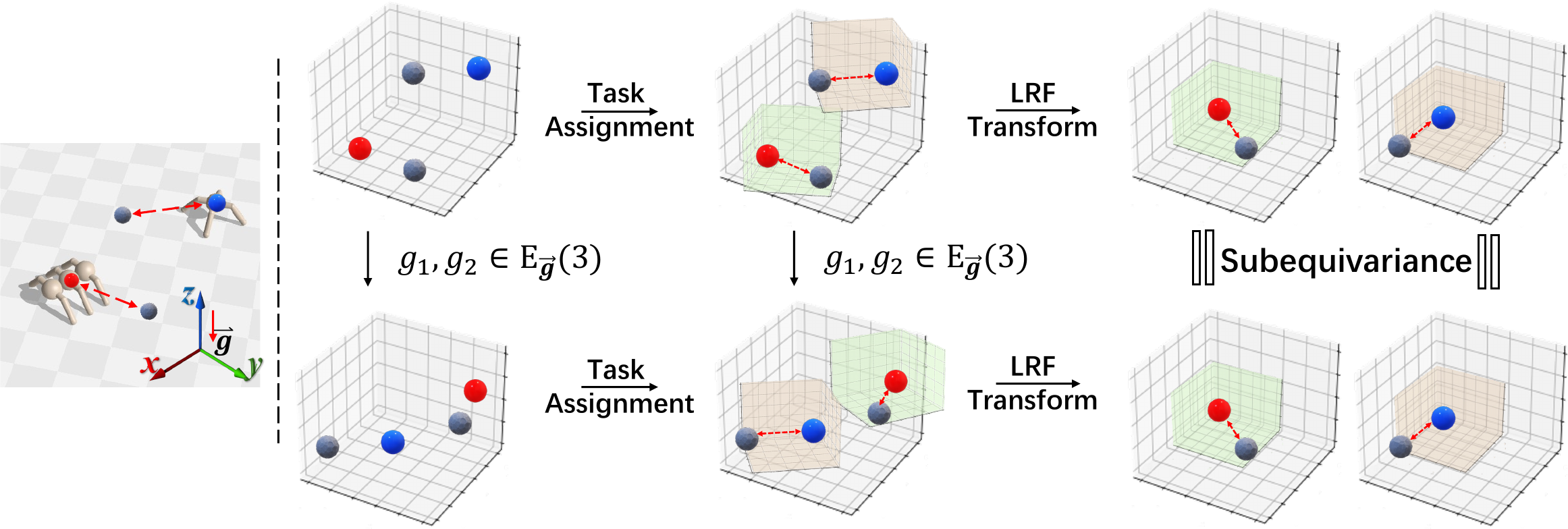}
\vspace{-.2in}
\caption{
Illustration of the symmetry in our 3D multi-entity physical environments. In this example, there are two agents (red and blue) navigating towards two objects (grey). To mitigate the exponential-growth complexity, we conduct task assignment to decouple the whole state space into local views (the orange and green transparent coordinate frames), where one agent is assigned for one object. These local views can be represented by local reference frame (LRF), leading to representations that are independent of any translation, or rotation around the gravity direction of the global coordinates of the entities. Such symmetry is called $\text{E}(3)$ subequivariance, distinct from conventional $\text{E}(3)$ equivariance, accounting for gravitational effects. In form, subequivariance is encapsulated by the group $\text{E}_{\vec{\boldsymbol{g}}}(3)$—translations/rotations/reflections along gravity $\vec{\boldsymbol{g}}$ (a 3D Euclidean subgroup of $\text{E}(3)$ in \cref{sec:symmetry}).
Codes are available on our project page: \href{https://alpc91.github.io/SMERL/}{https://alpc91.github.io/SMERL/}.
}
\label{fig:teaser}
\vspace{-.13in}
\end{figure}

\section{Introduction}
Learning to navigate, control, cooperate, and compete in the 3D physical world is a fundamental task in developing intelligent agents.
Deep reinforcement learning (RL) has made impressive breakthroughs, particularly in single-entity systems, with agent policies evolving through environmental interactions~\cite{mnih2015human,silver2016mastering,mnih2016asynchronous,schulman2017proximal,bansal2018emergent,liu2018emergent,liu2022motor}.
However, an intricate challenge is generalizing across configurations like transformations, morphologies, and tasks, which are interlinked and complicate the learning process.
In particular, multi-entity systems, which include agents, objects, and other entities defined in~\cite{spelke2022babies}, present considerable challenges compared to single-entity scenarios, partly due to exponential expansion of global transformations as the number of entities increases~\cite{deng2023banana}. 
In~\cref{fig:teaser}, for example, any horizontal rotation of the entities, although producing different global coordinates and representations, does not change the essential geometry and their local views of different entities. Such symmetry, defined as \emph{subequivariance} (formal definition is given in \cref{sec:symmetry}) in this paper, provides a potential way to reduce the complexity of the state space.
There are certain previous studies leveraging this type of symmetry in RL.
For instance, the methods based on heading normalization (HN)~\cite{won2020scalable,won2022physics} transform global coordinates into a local reference frame (LRF), which is yet non-learnable and non-adjustable with respect to the goal of the task.  Morphology-based RL advances~\cite{chen2023sgrl} for single entity have incorporated subequivariance~\cite{han2022learning} into policy modeling, reducing reliance on hand-crafted LRFs. However, extending subequivariance from single-entity to inter-entity transformations reveals unexplored challenges due to the coupled local space of each entity.

To tackle the difficulties of interdependence and generalization,
we introduce a novel framework, named \emph{Subequivariant Hierarchical Neural Networks} (SHNN), integrating task assignment and local entity-level subequivariant message passing within a hierarchical network architecture. SHNN offers two key advancements:
\textbf{1.} We implement the task assignment using bipartite graph matching to dynamically construct local entity-level graphs. This approach aids in managing interdependence among entities by decoupling local transformations from the overall structure.
\textbf{2.} We implement local entity-level subequivariant message passing, effectively compiling information from related entities. To guide body-level control, we utilize entity-level information to define a LRF for each entity, effectively compressing the global state space and facilitating the generalization of body-level policy in a lossless way.

Moreover, another reason for this limited exploration is the absence of suitable environments in existing morphology-based~\cite{huang2020one, chen2023sgrl,furuta2023asystem} and multi-agent reinforcement learning (MARL)~\cite{samvelyan2019starcraft,de2020deep,ellis2022smacv2,flair2023jaxmarl,lechner2023gigastep} frameworks. 
These benchmarks inadequately probe complex entity interactions, especially in scenarios involving multi-agent dynamics under a diverse range of inter-entity transformations.
To bridge this gap, we present a new suite of \emph{Multi-entity Benchmark} (\textsc{MeBen}) in 3D space. Built upon JAX-based RL environments~\cite{jax2018github,jraph2020github,flax2020github,brax2021github,gu2021braxlines}, \textsc{MeBen} is designed to investigate multi-entity interactions, encompassing both cooperative and competitive dynamics, within physical geometric symmetry constraints that include a diverse range of inter-entity transformations.

Our contributions are summarized as follows:
\begin{itemize}
    \item To effectively optimize the policy in 3D multi-entity physical environments, we propose SHNN, a novel framework that offers a superior plug-in alternative to hand-crafted LRFs. It decouples local transformations from the overall structure and compresses the state space by leveraging local physical geometric symmetry, particularly in gravity-affected environments.
    
    \item We introduce \textsc{MeBen}, a collection of subequivariant morphology-based MARL environments, designed for in-depth exploration of multi-entity interactions within physical geometric symmetry constraints. These environments, including a diverse range of inter-entity transformations, facilitate both cooperative and competitive dynamics.

    \item We demonstrate the effectiveness of SHNN in the proposed 3D multi-entity physical environments, including Team Reach and Team Sumo \footnote{For detailed task settings, see \cref{sec:bench_construct}.}. Our extensive ablations and comparative analyses also reveal the efficacy of the proposed ideas.
\end{itemize}

\section{Preliminaries}

\paragraph{Geometric Symmetry}
\label{sec:symmetry}
The symmetrical structure in 3D  environments is $\text{E}(3)$, which is a 3-dimensional Euclidean \emph{group}~\cite{dresselhaus2007group} that consists of rotations, reflections, and translations. 
\begin{definition}[Group]
\label{def:group}
A group $G$ is a set of transformations with a binary operation ``$\cdot$'' satisfying these properties: ``$\cdot$'' is closed under associative composition, there exists an identity element, and each element must have an inverse. 
\end{definition}
\vspace{-.13in}
Symmetrical structure enforced on the model~\cite{worrall2017harmonic,van2020mdp,thomas2018tensor,fuchs2020se3,jing2020gvp,deng2021vector,villar2021scalars,satorras2021en,huang2022equivariant,han2022learning,luo2022equivariant,chen2023sgrl,joshi2023expressive,wang2024equivariant,han2024survey} is formally described by the concept of \emph{equivariance}.
\begin{definition}[Equivariance]
    \label{def:equ}
    Suppose $\Vec{\mZ}$ \footnote{Note that for the input of $f$, we have added the right-arrow superscript on $\Vec{\mZ}$ to distinguish it from the scalar $\vh$ that is unaffected by the transformations.} to be 3D geometric vectors (positions, velocities, etc) that are steerable by a group $G$, and $\vh$  non-steerable features.
    The function $f$ is $G$-equivariant, if for any transformation $g\in G$, $f(g\cdot\Vec{\mZ},\vh)=g\cdot f(\Vec{\mZ},\vh)$, $\forall\Vec{\mZ}\in\sR^{3\times m}, \vh\in\sR^{d}$.  Similarly, $f$ is invariant if $f(g\cdot\Vec{\mZ},\vh)= f(\Vec{\mZ},\vh)$. 
\end{definition}
\vspace{-.13in}
Specifically, the E(3) operation ``$\cdot$'' is instantiated as $g\cdot\Vec{\mZ}\coloneqq \mO\Vec{\mZ}$ for the orthogonal group that consists of rotations and reflections where $\mO\in \text{O}(3)\coloneqq\{\mO\in\sR^{3\times 3}|\mO^\top\mO=\mI\}$, and is additionally implemented as the translation $g\cdot \vec{\vx} \coloneqq \vec{\vx} + \vec{\vt}$ for the 3D coordinate vector where $\vec{\vt}\in \text{T}(3) \coloneqq\{\vec{\vt}\in\R^{3}\}$.
To align with the principles of classical physics under the influence of gravity, we introduce a relaxation of the group constraint. Particularly, we consider equivariance within the subgroup of $\text{E}(3)$ induced by gravity $\vec{\vg}\in\sR^3$, defined as $\text{O}_{\vec\vg}(3)\coloneqq\{\mO\in\sR^{3\times 3}|\mO^\top\mO=\mI, \mO\vec\vg=\vec\vg \}$ and $\text{T}_{\vec\vg}(3) \coloneqq\{\vec{\vt}\in\R^{3}|\vec{\vt}\vec\vg=\vec{\bm{0}}\}$. By this means, the $\text{E}_{\vec{\bm{g}}}(3)$-equivariance is only restrained to the translations/rotations/reflections along the direction of $\vec\vg$. We term \emph{subequivariance} primarily refers to $\text{E}_{\vec{\bm{g}}}(3)$-equivariance.

\vspace{-.13in}
\paragraph{Problem Definition and Notation}
Our investigation in 3D physical environments delves into the interactions among multiple entities ~\cite{spelke2022babies}, including agents with distinct morphologies that enable sophisticated form and motion control, objects that critically influence the system's symmetry and dynamics.
We model this intricate setting as a Decentralized Partially Observable Markov Decision Process (Dec-POMDP) ~\cite{bernstein2002complexity, oliehoek2016concise}, represented by \( G = \langle \gS, \gA, P, O, r, \gamma, \Omega \rangle \), where \( \gS \) indicates the state space, \( \gA \) the action space, \( P \) the transition probability, \( O \) the observation function, \( r \) the reward function, and \( \gamma \) the discount factor.
The complete set of all entities is denoted as \( \Omega = \{1, \cdots, N+M\} \), where \( N \) denotes the number of cooperative agents, while \( M \) represents the count of objects (or competitive agents). 
\textbf{1. Observation.} Each agent, denoted as \( i \), is composed of \( K_i \) bodies.
At any given timestep \( t \), agent \( i \) obtains a unique observation \( \vo_{i}(t) := O(\vs(t), i) \) from the global state \( \vs(t) \in \gS \), capturing its individual perspective of the environment. This observation encompasses detailed state information about agent \( i \) itself, represented as \( \{\vs_{i,k}(t)\}_{k=1}^{K_i} \). However, 
an agent's awareness of other entities is confined to states of their root bodies, denoted as \( \vs_{j,1}(t) \) for each \( j \in \Omega \setminus \{i\} \). Thus, the complete observation for agent \( i \) at time \( t \) is formulated as \( \vo_{i}(t) := \{\{\vs_{i,k}(t)\}_{k=1}^{K_i}, \vs_{j,1}(t) \mid j \in \Omega \setminus \{i\}\} \).
\textbf{2. Action and Reward.} The decision-making process for each agent \( i \) at timestep \( t \) involves selecting an action \( \va_i(t) :=\{a_{i,k}(t)\}_{k=2}^{K_i} \in \gA \) based on its policy \( \pi_{\theta_i}(\va_i(t) | \vo_i(t)) \). Each actuator \( k \) of the agent, ranging from 2 to \( K_i \), contributes by generating a torque \( a_{i,k}(t) \in [-1,1] \). Consequently, the aggregated action \( \va(t) = (\va_1(t), \cdots, \va_N(t)) \in \gA^N \) arises from the combined actions of all agents. 
The environment reacts by transitioning to a new global state \( \vs(t+1) \sim P(\vs(t+1) | \vs(t), \va(t)) \) and allocates a shared team reward \( r(\vs(t), \va(t)) \).
\textbf{3. Actor and Critic.} 
Multiple entities interactions, depending on the number of agents and their relational dynamics, are typically categorized into single-agent, cooperative, competitive, and mixed interactions. To effectively navigate this complex multi-agent environment, Multi-Agent Proximal Policy Optimization (MAPPO) ~\cite{schulman2017proximal,yu2022surprising} is employed to optimize a joint policy $\bm{\pi}_{\theta}=(\pi_{\theta_1},\cdots,\pi_{\theta_N})$ in this intricate environment. Our objective is to maximize the expected global return:
\begin{equation}
\label{eq:objective}
\mathcal{J}(\theta)=\mathbb{E}_{\bm{\pi}_{\theta}} \sum_{t=0}^\infty\left[\gamma^t r(\vs(t), \va(t))\right].
\end{equation}
MAPPO involves the development of both the joint policy \( \pi_{\theta} \) and a value function $V_{\phi}$, crucial for variance reduction and integrating information beyond the agents' local observations, adhering to the Centralized Training with Decentralized Execution (CTDE) paradigm~\cite{lowe2017multi,sunehag2018value,rashid2018qmix,foerster2018counterfactual,kuba2021trust,wen2022multi,yu2022surprising,jeon2022maser}.

\vspace{-.1in}
\section{Subequivariant Hierarchical Neural Networks}
\label{sec:SHNN}

In this section, we introduce our entire framework \emph{Subequivariant Hierarchical Neural Networks} (SHNN), visualized in \cref{fig:framework}, consisting of an input processing module, a novel task assignment to decouple local transformations from the overall structure, a local entity-level subequivariant message passing for expressive information passing and fusion, a local reference frame transform to addresses local transformations by leveraging local physical geometric symmetry in environments with gravity and a body-level control module to obtain the final policy.
Building upon these, the global state space is effectively compressed.

\vspace{-.1in}
\paragraph{Input Processing} 
The observation input for agent \( i \) at time \( t \) is formulated as \( \vo_{i}(t) = \{\{\vs_{i,k}(t)\}_{k=1}^{K_i}, \vs_{j,1}(t) \mid j \in \Omega \setminus \{i\}\} \)\footnote{For simplicity, we omit the index $t$ henceforth in the above notations at time $t$.}.
To adhere to the constraints of physical geometric symmetry, the state \( \vs_{i,k} \) is subdivided into directional geometric vectors \( \vec{\mZ}_{i,k} \) and scalar features \( \vh_{i,k} \). Elements in \( \vec{\mZ}_{i,k} \) will rotate according to the transformation \( g \in \text{O}_{\vec\vg}(3) \), while those in \( \vh_{i,k} \) remain unchanged. Specifically, in our 3D environment, \( \vec{\mZ}_{i,k} \in \R^{3 \times 3} \) comprises the position \( \vec{\vp}_{i,k} \in \sR^3 \), positional velocity \( \vec{\vv}_{i,k} \in \sR^3 \), and rotational velocity \( \vec{\bm{\omega}}_{i,k} \in \sR^3 \).
Here, \( \vec{\vp}_{i,k} \) is transformed into a translation-invariant representation by redefining it as \( \vec{\vp}_{i,k} - \vec{\vc} \), where \( \vec{\vc} = \frac{1}{N+M} \sum_{i=1}^{N+M}\vec{\vp}_{i,1} \). This operation subtracts \( \vec{\vc} \), the average root position of all entities, thereby ensuring translation invariance.
The scalar features \( \vh_{i,k} \in \sR^{13} \) include the rotation angles \( \kappa_{i,k}, \zeta_{i,k}, \delta_{i,k} \) of the joint axes and their corresponding ranges, along with a 4-dimensional one-hot vector indicating the type of body such as ``torso", ``limb", or ``ball". The direction of gravity $\vec\vg$ is set to be along the $z$-axis.

\begin{figure*}[t!]
\centering
\includegraphics[width=\linewidth]{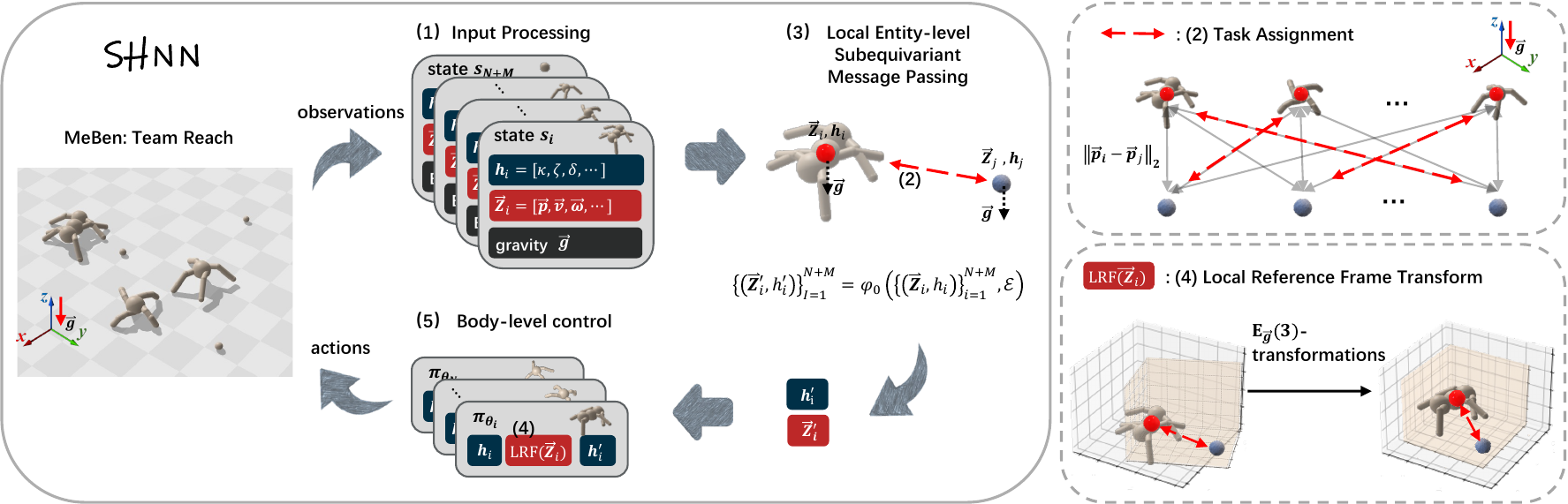}
\vspace{-.25in}
\caption{The flowchart of SHNN. On the left, The states of each entity $i$ are processed into scalar features $\vh_{i}$ and directional geometric vectors $\vec{\mZ}_{i}$, and are updated by local entity-level subequivariant message passing in a task assignment entity-level graph.  Finally, the invariant local reference frame body-level control policy is obtained. Here, function $\varphi_o$ serves as the local entity-level subequivariant MP, and $\gE$ is the local entity-level graph topology.
The right side illustrates our key innovation: the dynamic task assignment leveraging bipartite graph matching, and the construction of an $\text{E}_{\vec{\bm{g}}}(3)$-equivariant local reference frame for each entity to address local transformations.}
\label{fig:framework}
\vspace{-.13in}
\end{figure*}

\vspace{-.1in}
\paragraph{Task Assignment}  
In multi-entity environments, it is crucial to manage complex interactions among entities.
Initially, we consider a fully connected entity-level graph \( \gG = (\gV, \gE) \), with \( \gV = \Omega \) and \( \gE = \{(i, j): i, j \in \Omega, i \neq j\} \). 
To decouple local transformations from the overall structure,
we introduce task assignment 
which dynamically adjusts the edges, forming local graphs of associated entities. The graph is then redefined as a task assignment entity-level graph, where \( \gE = \{(i, j): i, j \in \Omega, \mathcal{C}(i) = \mathcal{C}(j)\} \) and \(\mathcal{C}\) are the assignment labels. This assignment, akin to part segmentation~\cite{deng2023banana}, selectively links related entities, like agents and their objects.
Specifically, in this study, we employ a rule-based task assignment approach using bipartite graph matching, guided by inter-entity distance costs.
Due to the static programming characteristics of JAX, implementing the Hungarian matching algorithm proves challenging. 
Consequently, we employ a greedy strategy-based bipartite graph matching (refer to \cref{alg:greedy_bipartite_ocp}).

\vspace{-.1in}
\paragraph{Local Entity-level Subequivariant Message Passing}
We propose enhancing mainstream Neural Networks (such as MLPs) in RL with an additional local entity-level subequivariant message passing (MP). This task assignment entity-level graph adeptly integrates local entity-level information from related entities and, through local subequivariant MP, efficiently instills the desired local physical geometric symmetry in environments with gravity into mainstream Neural Networks to address local transformations.

For each entity \( i \in \Omega \), input node features are initialized using the entity's root state. Specifically, \( \vec{\mZ}_i \) is assigned as \( \vec{\mZ}_{i,1} \), and \( \vh_i \) is set as \( [\vh_{i,1}, \vec{\vp}_{i,1}^z] \), where [ ] is the stack along the last dimension and \( \vec{\vp}_{i,1}^z \) represents the projection of the coordinate \( \vec{\vp}_{i,1} \) onto the \( z \)-axis. This projection effectively indicates the relative height of entity \( i \), considering the ground as the reference point.

Within the context of entity-level interactions, our function $\varphi_o$ serves as the local entity-level subequivariant MP. It updates each entity's node features by considering the collective input features and the established graph connectivity.
\begin{align}
\label{eq:somp}
    \{(\vec{\mZ}'_i, \vh'_i)\}_{i=1}^{N+M} &= \varphi_o\left(\{(\vec{\mZ}_i,\vh_i)\}_{i=1}^{N+M}, \gE \right).
\end{align}
Specifically, $\varphi_o$ is unfolded as the following MP and aggregation computations:

\begin{align}
\label{eq:vector-interaction}
\vec{\mZ}_{ij} &= [(\vec{\vp}_{j,1} - \vec{\vp}_{i,1}),\vec{\mZ}_{i},\vec{\mZ}_{j}],\\
\label{eq:scalar-interaction}
\vh_{ij} &=  [\|\vec{\vp}_{j,1} - \vec{\vp}_{i,1}\|_2, \vh_{i}, \vh_{j}] , \\
\label{eq:message-mlp}
\vec{\mM}_{ij}, \vm_{ij} &= \phi_{\vec{\vg}}\left(\vec{\mZ}_{ij}, \vh_{ij} \right), \\
\label{eq:message-aggregation}
\vec{\mM}_i, \vm_i &= \sum\nolimits_{j\in\gN(i)}\vec{\mM}_{ij}, \sum\nolimits_{j\in\gN(i)}\vm_{ij},
\end{align}
\begin{equation}
\aligned
\label{eq:update-mlp}
(\vec{\mZ}'_i, \vh'_i) &= (\vec{\mZ}_i, \vh_i)\\
&+ \psi_{\vec{\vg}}\left([\vec{\mM}_{i}, \vec{\mZ}_i], [\vm_{i}, \vh_i] \right), 
\endaligned
\end{equation}
where [ ] denotes the stack along the last dimension, $\gN(i)=\{j: (i,j)\in\gE \}$ is the neighbors of node $i$, and both $\phi_{\vec{\vg}}$ and $\psi_{\vec{\vg}}$ are subequivariant~\cite{han2022learning,chen2023sgrl}, simplified as follows:
\begin{align}
\label{eq:subeq}
\begin{aligned}
    &f_{\vec{\vg}}(\Vec{\mZ},\vh)=[\Vec{\mZ}, \vec{\vg}]\mM_{\vec{\vg}},\\
    &\text{s.t.}\quad \mM_{\vec{\vg}}=  \sigma([\Vec{\mZ},\vec{\vg}]^{\top}[\Vec{\mZ},\vec{\vg}],\vh),
\end{aligned}
\end{align}
where $\sigma\left(\cdot\right)$ is an Multi-Layer Perceptron (MLP) and $[\Vec{\mZ}, \vec{\vg}]\in\sR^{3\times (m+1)}$ is a stack of $\vec\mZ$ and $\vec\vg$ along the last dimension. The specific form of the function is detailed in ~\cref{sec:subeq}.

We establish a subequivariant edge representation $\vec{\mZ}_{ij}$ and invariant edge features $\vh_{ij}$.
The edge features $\vec{\mZ}_{ij}$ and $\vh_{ij}$ are then input into  $\phi_{\vec{\vg}}$, as defined in ~\cref{eq:message-mlp}, to yield vector and scalar messages $\vec{\mM}_{ij}$ and $\vm_{ij}$, respectively. Message aggregation and state updates are performed as outlined in \cref{eq:message-aggregation} and \cref{eq:update-mlp} using the function $\psi_{\vec{\vg}}$, leading to the updated states \( \vec{\mZ}'_i \) and \( \vh'_i \). This process ensures the generation of outputs maintaining the desired subequivariance or invariance properties.

\paragraph{Local Reference Frame Transform}
Following the local entity-level subequivariant MP, we integrate the resultant entity-level information \( \vh'_i \) and \( \vec{\mZ}'_i \) with body-level information to guide body-level control. Here,
\( \vec{\vu}_i \) represents a local reference frame (LRF) transform vector, derived through the linear transformation \( \vec{\vu}_i = \vec{\mZ}'_i \mW_{\vec{\vu}} \), with \( \mW_{\vec{\vu}} \in \sR^{m \times 1} \).

We normalize and orthogonalize the transform vector \( \vec{\vu}_i \) for each entity:
\begin{equation}
\label{eq:ortho}
\begin{aligned}
    \vec{\ve}_{i1} &= \frac{\vec{\vu}_{i} - \langle\vec{\vu}_{i},\vec{\ve}_{i3}\rangle \vec{\ve}_{i3}}{\| \vec{\vu}_{i} - \langle\vec{\vu}_{i},\vec{\ve}_{i3}\rangle \vec{\ve}_{i3} \|}, \\
    \vec{\ve}_{i2} &= \vec{\ve}_{i1} \times \vec{\ve}_{i3},\\
    \vec{\ve}_{i3} &= [0,0,1]^{\top}.
\end{aligned}
\end{equation}
Here, \( \langle\cdot,\cdot\rangle \) denotes the inner product, and \( \times \) the cross product. We refer to the aforementioned procedure as OP, following which we proceed to construct an entity-wise rotation matrix on transform vectors.
\begin{equation}
\label{eq:ort_matrix}
    \mO_i = \left[ \vec{\ve}_{i1}, \vec{\ve}_{i2}, \vec{\ve}_{i3} \right] = \operatorname{OP}(\vec{\vu}_i), \quad i \in \Omega.
\end{equation}

\begin{theorem}
\label{prop:eqo}
The learned entity-wise rotation matrix, denoted as $ \mO_i = \operatorname{OP}(\vec{\vu}_i)$, are \(\text{SO}_{\vec\vg}(3)\)-equivariant, satisfying any transformation $g \in\text{SO}_{\vec\vg}(3)$, $g \cdot  \mO_i = \operatorname{OP}(g \cdot \vec{\vu}_i)$. 
\end{theorem}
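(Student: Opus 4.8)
The plan is to reduce the statement to a small number of elementary equivariance facts about the orthogonalization map $\operatorname{OP}$ defined in \cref{eq:ortho}, exploiting the two defining properties of the subgroup $\text{SO}_{\vec\vg}(3)$: every element fixes the gravity axis $\vec\vg$ and has determinant $+1$. Throughout I identify a group element $g\in\text{SO}_{\vec\vg}(3)$ with its matrix $\mO\in\sR^{3\times 3}$ acting by left multiplication, $g\cdot\vec{\vx}=\mO\vec{\vx}$, exactly as in \cref{sec:symmetry}.

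First I would establish the input equivariance, namely that $\vec{\vu}_i$ is a genuine steerable vector with $g\cdot\vec{\vu}_i=\mO\vec{\vu}_i$. This follows because $\vec{\mZ}'_i$ is produced by the subequivariant message passing $\varphi_o$ of \cref{eq:somp}, so $g\cdot\vec{\mZ}'_i=\mO\vec{\mZ}'_i$, while the readout $\vec{\vu}_i=\vec{\mZ}'_i\mW_{\vec{\vu}}$ applies $\mW_{\vec{\vu}}$ on the channel (column) dimension, which commutes with the row action of $\mO$; hence $g\cdot\vec{\vu}_i=(\mO\vec{\mZ}'_i)\mW_{\vec{\vu}}=\mO\vec{\vu}_i$.

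Next I would propagate this through the three columns of $\operatorname{OP}$ in the order they are defined. For the third column, $\vec{\ve}_{i3}=[0,0,1]^\top$ is aligned with $\vec\vg$; since $\mO\vec\vg=\vec\vg$ for $\mO\in\text{SO}_{\vec\vg}(3)$, the third output is unchanged and equals $\mO\vec{\ve}_{i3}$. For the first column I would substitute $\vec{\vu}_i\mapsto\mO\vec{\vu}_i$ into \cref{eq:ortho}: using $\mO\vec{\ve}_{i3}=\vec{\ve}_{i3}$ together with orthogonality of $\mO$ (which preserves both the inner product $\langle\vec{\vu}_i,\vec{\ve}_{i3}\rangle$ and the Euclidean norm in the denominator), the numerator factors as $\mO\big(\vec{\vu}_i-\langle\vec{\vu}_i,\vec{\ve}_{i3}\rangle\vec{\ve}_{i3}\big)$ and the normalizer is invariant, giving first column $\mO\vec{\ve}_{i1}$. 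For the second column I would invoke $\vec{\ve}_{i2}=\vec{\ve}_{i1}\times\vec{\ve}_{i3}$ and the identity $(\mO\vec{\va})\times(\mO\vec{\vb})=\mO(\vec{\va}\times\vec{\vb})$, yielding $\mO\vec{\ve}_{i2}$. Stacking the three columns then gives $\operatorname{OP}(\mO\vec{\vu}_i)=[\mO\vec{\ve}_{i1},\mO\vec{\ve}_{i2},\mO\vec{\ve}_{i3}]=\mO\,\mO_i=g\cdot\mO_i$, which is the claim.

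The main obstacle — and the only place where restricting to $\text{SO}_{\vec\vg}(3)$ rather than $\text{O}_{\vec\vg}(3)$ is essential — is the cross-product step, since in general $(\mO\vec{\va})\times(\mO\vec{\vb})=(\det\mO)\,\mO(\vec{\va}\times\vec{\vb})$. The sign factor $\det\mO$ equals $+1$ precisely on the special orthogonal subgroup, so I would make the determinant hypothesis explicit there; a reflection in $\text{O}_{\vec\vg}(3)$ would flip $\vec{\ve}_{i2}$ and break the clean equivariance. A secondary point I would note is that the denominator in \cref{eq:ortho} must be nonzero, i.e. $\vec{\vu}_i$ is not parallel to $\vec\vg$, which I would record as a genericity condition on the learned readout so that $\operatorname{OP}$ is well defined.
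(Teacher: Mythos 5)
Your proof is correct and follows essentially the same route as the paper's: a column-by-column verification that the gravity-aligned axis $\vec{\ve}_{i3}$ is fixed by any $\mO\in\text{SO}_{\vec\vg}(3)$, that orthogonality preserves the inner product and norm in the projection step so the first column factors as $\mO\vec{\ve}_{i1}$, and that the cross product for the second column picks up exactly the factor $\det\mO=+1$, which is where the restriction from $\text{O}_{\vec\vg}(3)$ to $\text{SO}_{\vec\vg}(3)$ is used. Your two additions---deriving the equivariance of the readout $\vec{\vu}_i=\vec{\mZ}'_i\mW_{\vec{\vu}}$ from the subequivariant message passing, and flagging the genericity condition that $\vec{\vu}_i$ must not be parallel to $\vec\vg$ for the normalization to be well defined---go slightly beyond the paper's proof, which states neither.
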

\begin{proof}
See \Cref{sec:proof}.
\end{proof}

At this stage, we establish the LRF for each agent \( i \), with \(\vec{\vc}\) as the origin and $\vec{\ve}_{i1}$ as the orientation of the $x$-axis. 
Notably, with the task assignment, the origin of each agent's LRF shifts from the collective average root position $\vec{\bm{c}}= \frac{1}{N+M} \sum_{i=1}^{N+M}\vec{\bm{p}}_{i,1}$,  to the specific entity's position, $\vec{\bm{c}}=\vec{\bm{p}}_{\mathcal{C}(i),1}$.
This LRF construction enables us to achieve invariant observation inputs:
\begin{equation}
\label{eq:lrf}
\vo'_{i} = \mO_i^{\top}\vo_{i}
= [\{\mO_i^{\top} \vec{\mZ}_{i,k}, \vh_{i,k}\}_{k=1}^{K_i}, \mO_i^{\top} \vec{\mZ}_{j,1}, \vh_{j,1}],
\end{equation}
where $j \in \gN(i)$, and [ ] is the stack along the last dimension, with adjustments for relative positioning in assignment as $\vec{\bm{p}}_{i,k} = \vec{\bm{p}}_{i,k} - \vec{\bm{p}}_{\mathcal{C}(i),1}, \vec{\bm{p}}_{j,1} = \vec{\bm{p}}_{j,1} - \vec{\bm{p}}_{\mathcal{C}(j),1}$, thus achieving decoupled translation and rotation invariance.

Our methodology integrates subequivariant information across entity and body levels via LRF Transform. 
While disrupting reflection symmetry, the necessity to construct an orthogonal rotation matrix significantly enhances the capabilities of subequivariant networks.
This emphasis on rotation symmetry substantially outweighs the reduced focus on reflection symmetry, particularly in diminishing the massive search space for optimal policies.
Empirical validations of this enhancement are detailed in~\cref{sec:ablation}.

\paragraph{Body-level Control}
We are now equipped to output the invariant actor policy \( \pi_\theta \) and invariant critic value-function \( V_{\phi} \) for the training objective in \cref{eq:objective}.
For each agent \( i \), the invariant actor policy \( \pi_{\theta_i} \) leverages the invariant \( \vo'_{i} \) and \( \vh'_i \), defined by
\begin{align}
\label{eq:pi}
    \pi_{\theta_i} = \sigma_{\pi_i}(\vo'_{i}, \vh'_i),
\end{align}
where \( \sigma_{\pi_i} \) is a MLP. Here, \( \pi_{\theta_i} \in \sR^{2 \times (K_i-1)} \) represents the loc and scale parameters of a Tanh-Normal Distribution for the \( (K_i-1) \) actuators of agent \( i \). Consequently, each actuator samples its corresponding torque \( a_{i,k} \in [-1,1] \) from this distribution.

Besides, the invariant critic value-function \( V_{\phi} \) utilizes the entity-level invariant features \( \{\vh'_i\}_{i=1}^{N+M} \), formulated as
\begin{align}
\label{eq:value}
    V_{\phi} = \sigma_{V}(\{\vh'_i\}_{i=1}^{N+M}),
\end{align}
where \( \sigma_{V} \) is a MLP, and \( V_{\phi} \in \sR \).
Formal proof of the $\text{SO}_{\vec\vg}(3)$-invariance of the output action and value are presented in \Cref{sec:proof}.

\section{Multi-entity Benchmark (\textsc{MeBen})}
\label{sec:method}

\begin{figure}[t!]
\centering
\includegraphics[width=\linewidth]{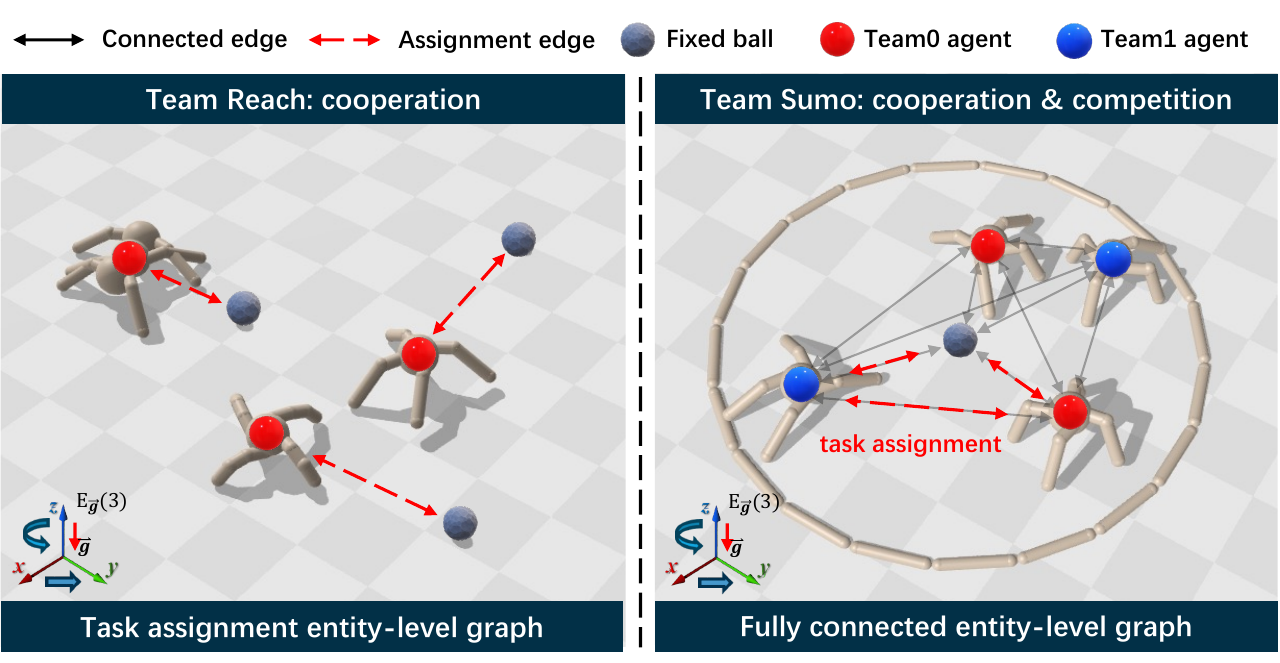}
\vspace{-.25in}
\caption{
Illustration of \textsc{MeBen}: Team Reach (left) where agents cooperate to collectively reach all fixed balls, and Team Sumo (right) where agents engage in both cooperation and competition to push opponents away from the fixed ball. These tasks necessitate the decoupling of local transformations via dynamic task assignment graph construction from the overall structure (depicted by a fully connected graph), while employing $\text{E}_{\vec{\bm{g}}}(3)$-equivariance to effectively compress global state space.
}
\label{fig:meben}
\vspace{-.13in}
\end{figure}

\subsection{Environments descriptions}
\label{sec:bench_construct}
In this subsection, we present the details involved in constructing our environments: Team Reach and Team Sumo, as illustrated in \cref{fig:meben}.

\begin{figure*}[t!]
\centering
\includegraphics[width=\linewidth]{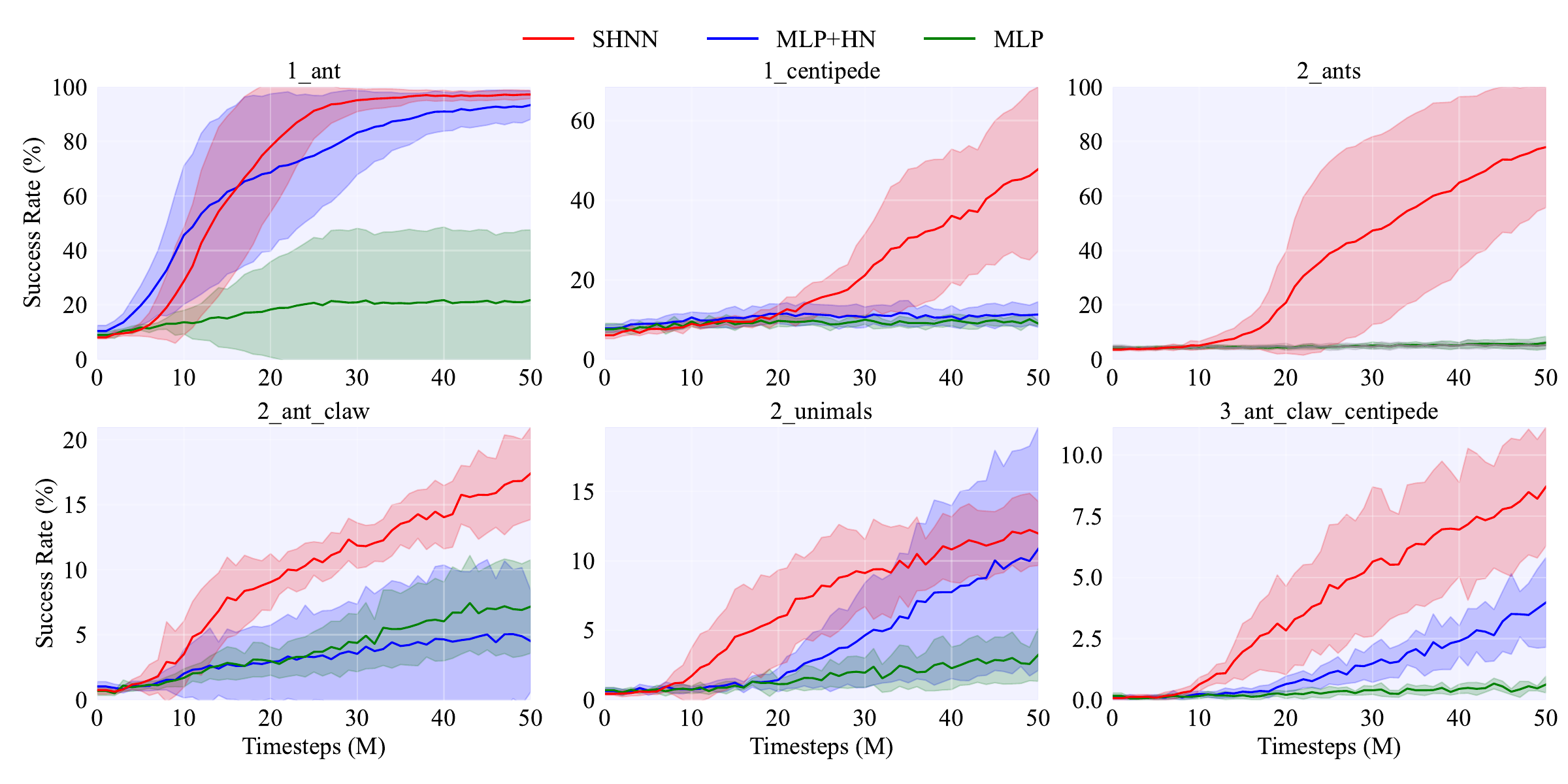}
\vspace{-.3in}
\caption{Training and Evaluation Curves in Team Reach Environments. The shaded area represents the standard error.
}
\label{fig:reach_ben}
\end{figure*}

\paragraph{Agents}
In our studies, we leverage a variety of morphologies, including ants, claws, and centipedes from MxT-Bench~\cite{furuta2023asystem}, as well as unimals from \citet{gupta2022metamorph}. Notably, centipedes and unimals exhibit asymmetrical forms, potentially influencing the overall system's symmetry and dynamics. 
This diverse range of agent morphologies enables a nuanced exploration of agent dynamics in multi-entity, morphology-based RL environments.

\paragraph{Team Reach}
We expand the ``Reach'' task from a single-agent challenge to a collaborative ``Team Reach'' task, as shown in \cref{fig:reach_demo}. 
\textbf{1. Initial Conditions.} Entities set $\Omega$ include \( N \) agents and \( M \) fixed balls, $N \geq M$. Within an area of radius \( R \), we randomly position \( N \) agents and \( M \) fixed balls, also setting the initial orientations of the agents randomly.
Specific details for \( N \), \( M \), \( R \), and the agents' morphology are provided in \cref{tab:environments}.
\textbf{2. Termination.} The goal of this task is for \( M \) fixed balls to be \emph{simultaneously} occupied by agents. An episode is considered successfully completed and terminates once this condition is fulfilled. 
\textbf{3. Reward.}
The designed reward is detailed in \cref{sec:envs}.

\paragraph{Team Sumo}
We evolve the ``Sumo'' task from a purely competitive challenge to a mixed cooperative-competitive ``Team Sumo'' task, as shown in \cref{fig:sumo_demo}.
\textbf{1. Initial Conditions.} Entities set $\Omega$ comprise one fixed ball, \( N \) agents forming Team 1, and another \( M \) agents constituting Team 2.
The sumo arena, a circle with radius \( R \), has its center marked by the fixed ball.
Around this fixed ball, within a radius of \( R-1 \), we randomly position  \( N \) agents from Team 1 and another \( M \) from Team 2, ensuring each agent's orientation is also randomized. 
Specific details for \( N \), \( M \), \( R \), and the agents' morphology are provided in \cref{tab:environments}.
\textbf{2. Termination.} The objective is for either Team 1 or Team 2 to win by having an opposing team's agent disqualified, which occurs if it exceeds a distance \( R \) from the fixed ball. The team with the disqualified agent loses, triggering the termination of the episode. 
\textbf{3. Reward.}
The reward designed for each team is detailed in \cref{sec:envs}.

\begin{table}[t!]
\setlength\tabcolsep{3pt}
    \centering
    \tiny
    \caption{Comparison of Morphology-based Environment Setup.}
    \label{tab:environment_vs}
    \resizebox{\linewidth}{!}{
    \begin{tabular}{lccc}
    \toprule
     \textbf{Aspect} & \textbf{SGRL} & \textbf{MxT-Bench}& \textbf{\textsc{MeBen}} \\
     \midrule
Multi-Morphology     & \checkmark & \checkmark & \checkmark \\
Multi-Agent          & \texttimes & \texttimes & \checkmark \\
Diverse-Task         & \texttimes & \checkmark & \checkmark \\
Supported-Symmetry   & \checkmark & \texttimes & \checkmark \\
Accelerated-Hardware      & \texttimes & \checkmark & \checkmark \\
    \bottomrule
    \end{tabular}
    }
    \label{tab:compare}
\end{table}

\subsection{Design considerations}
\label{sec:MeBen}

\begin{figure*}[t!]
\centering
\includegraphics[width=\linewidth]{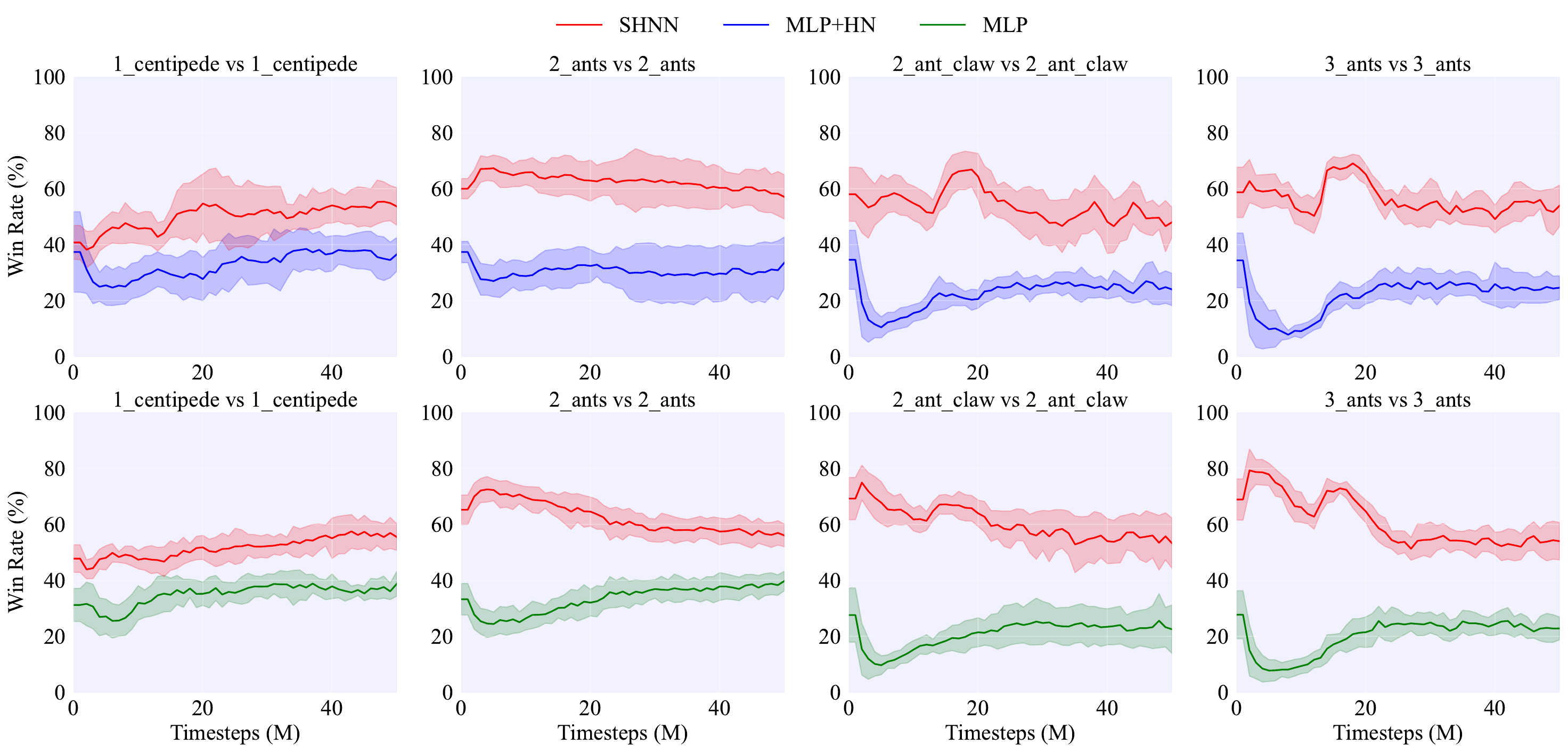}
\vspace{-.32in}
\caption{Training and Evaluation Curves in Team Sumo Environments. The shaded area represents the standard error.
}
\label{fig:sumo_ben}
\end{figure*}

We present the key features and limitations of existing benchmarks in \cref{tab:compare}, comparing them with our introduced \textsc{MeBen}. More comprehensive details are provided below.

\paragraph{Multi-Entity Dynamics} Diverging from the single-agent focus in SGRL~\cite{chen2023sgrl} and MxT-bench~\cite{furuta2023asystem}, \textsc{MeBen} expands to include environments with multiple entities~\cite{spelke2022babies}, enabling a detailed exploration of dynamics and interactions among varied entities, such as agents with complex morphologies, and other significant objects impacting system symmetry and behavior. 
\textsc{MeBen}'s unique design, makes it an ideal platform for a wide range of morphology-based reinforcement learning studies, encompassing single-agent and multi-agent systems (cooperative, competitive, and mixed scenarios). 

\paragraph{Geometric Symmetry}
In contrast to the fixed agent initialization in MxT-bench~\cite{furuta2023asystem}, \textsc{MeBen} provides a more realistic and geometric symmetric setup through its stochastic initial conditions. The positions and orientations of all entities, including agents and objects, are uniformly randomized in \textsc{MeBen}’s environments. This strategy eliminates any bias towards specific orientations or positions, maintaining geometric symmetry and providing a robust test bed for evaluating the generalization capabilities of models in environments with gravity.

\paragraph{Accelerated-Hardware}
In contrast to traditional CPU-based environments such as SGRL~\cite{chen2023sgrl}, \textsc{MeBen} harnesses the advanced capabilities of the Brax physics simulator~\cite{brax2021github} and Composer~\cite{gu2021braxlines}, and further builds upon the MxT-bench framework~\cite{furuta2023asystem}. This advancement propels \textsc{MeBen} into facilitating efficient and scalable hardware-accelerated iterations on GPUs or TPUs, making it exceptionally suitable for morphology-based reinforcement learning experiments.

\begin{table*}[t!]
    \centering
    \scriptsize
    \caption{Evaluations on Basic Architectures. Training and Evaluation in Team Reach Environments. We report Success Rate (\%) on the final step.  }
    \label{tab:transformer}
    \begin{tabular}{lcccccc}
    \toprule
    \textbf{Methods} & \texttt{1\_ant} & \texttt{1\_centipede} & \texttt{2\_ants}& \texttt{2\_ant\_claw}& \texttt{2\_unimals}& \texttt{2\_ant\_claw\_centipede}\\ \midrule
MLP+HN               & 93.39 $\pm$ 5.25       & 11.28 $\pm$ 3.21        & 5.25 $\pm$ 1.399       & 4.52 $\pm$ 3.93         & 10.86 $\pm$ 8.74       & 3.98 $\pm$ 1.83                     \\
SHNN                 & \textbf{97.26} $\pm$ 1.51       & \textbf{47.82} $\pm$ 20.62       & \textbf{77.93} $\pm$ 22.22      & \textbf{17.40} $\pm$ 3.54        & \textbf{11.97} $\pm$ 2.31       & \textbf{8.70} $\pm$ 2.42                     \\   
\midrule
Transformer+HN       & 5.47 $\pm$ 2.84        & 5.55 $\pm$ 2.99         & 2.47 $\pm$ 0.73        & 0.51 $\pm$ 0.19         & 0.25  $\pm$ 0.10        & 2.26 $\pm$ 1.92                     \\
SHTransformer        & \textbf{63.61} $\pm$ 39.57      & \textbf{11.52} $\pm$ 2.39        & \textbf{11.37} $\pm$ 15.50      & \textbf{1.17} $\pm$ 0.70         &\textbf{ 0.26} $\pm$ 0.15        & \textbf{7.12} $\pm$ 2.29                     \\ 
    \bottomrule
    \end{tabular}
\end{table*}

\section{Experiments}

\subsection{Experimental  Setup}

\paragraph{Baselines}  We compare our method SHNN, against mainstream neural networks, particularly MLP ~\cite{furuta2023asystem}, and its variant utilizing heading normalization tricks, denoted as MLP+HN~\cite{chen2023sgrl}. 
Please refer to \cref{sec:baselines} for details about baselines. 

\paragraph{Metrics}   \textbf{1. Team Reach: } \textit{Success Rate} = \#\textit{success\_episode} / \#\textit{evaluation\_episode}. 
 \textbf{2. Team Sumo:} 
For each team, \textit{Win Rate} = \#\textit{win\_episode} / \#\textit{evaluation\_episode}.
Each experiment is conducted with 10 seeds to report the \textit{Success (or Win) Rate} over \#\textit{evaluation\_episode} of $1024$.

\paragraph{Implementations} 
The environments used in this work are detailed in \cref{tab:environments}. To ensure fairness, all baselines, ablations, and our SHNN model use the same input information and employ MAPPO~\cite{yu2022surprising} as the training algorithm for MARL. SHNN is developed based on the MxT-bench\cite{furuta2023asystem} codebase, leveraging JAX\cite{jax2018github} and Brax~\cite{brax2021github,gu2021braxlines} for efficient, hardware-accelerated simulations. 
The value of the maximum timesteps per episode is $1000$.
Hyperparameter details are in \cref{sec:hypers}. Within the same team, agents share only the weights of entity-level message passing, not the body-level MLP. 
For the Team Reach environments, we construct a directed graph by dynamically assigning a fixed ball to each agent.
For the Team Sumo environments, we assign each agent an opponent from the opposing team and assign each with the arena's center ball.
Additionally, in Team Sumo environments, we adopt \citet{bansal2018emergent}'s approach where teams employing baseline methods and SHNN compete within an arena.

\subsection{Evaluations in Diverse Environments}
We begin by evaluating our method on diverse multi-entity tasks in 3D environments.
\textbf{Team Reach} in \Cref{fig:reach_ben}:
\textbf{1.} The MLP generally fails to achieve meaningful returns in most Team Reach cases, which is due to its vulnerability to local extremes within the expansive exploration space of our environments.
\textbf{2.} While MLP enhanced with Heading Normalization (MLP+HN) shows close performance to SHNN in specific environments like \texttt{1\_ant} and \texttt{2\_unimals}, their effectiveness diminishes in other scenarios. This variation in performance can be attributed to factors such as the symmetry in morphology and the complexities arising from multi-agent interactions.
\textbf{3.} Our proposed SHNN demonstrates a clear advantage, outperforming all baselines across various scenarios. 
\textbf{Team Sumo} in \Cref{fig:sumo_ben}:
with an increase in the number and complexity of agent morphologies, the advantage of our method over baselines progressively amplifies. 
These observations underscore that our proposed SHNN method not only exhibits superior capabilities in compressing the global state space of multi-entity tasks and significantly enhances the generalization of body-level policies, but also sees these advantages magnified in scenarios with increasingly complex agent morphologies and a greater number of agents.

\subsection{Extended Evaluations on Transformer}
\label{sec:transformer}

Additionally, building upon our initial exploration with MLP, our exploration extended to Transformers, utilizing the same backbone as Amorpheus~\cite{kurin2020cage} and MxT-Bench~\cite{furuta2023asystem} for $\sigma_{\pi_i}$ in \cref{eq:pi} and $\sigma_{V}$ in \cref{eq:value}.
Despite Transformers' advances in morphology-based RL~\cite{kurin2020cage,hong2021structure,dong2022low,chen2023sgrl,furuta2023asystem}, their training complexity and computational demands limit their performance improvement over MLPs in environments with restricted agent morphology diversity, as detailed in \cref{tab:model_comparison} and \cref{tab:transformer}, with curves in \Cref{fig:transformer}. 
Moreover, SHTransformer (our plug-in applied to body-level Transformer control) consistently outperforms standard Transformer models. This evidence confirms the plug-in's broad applicability and effectiveness across varied architectural frameworks.

\begin{figure}[t!]
\centering
\includegraphics[width=\linewidth]{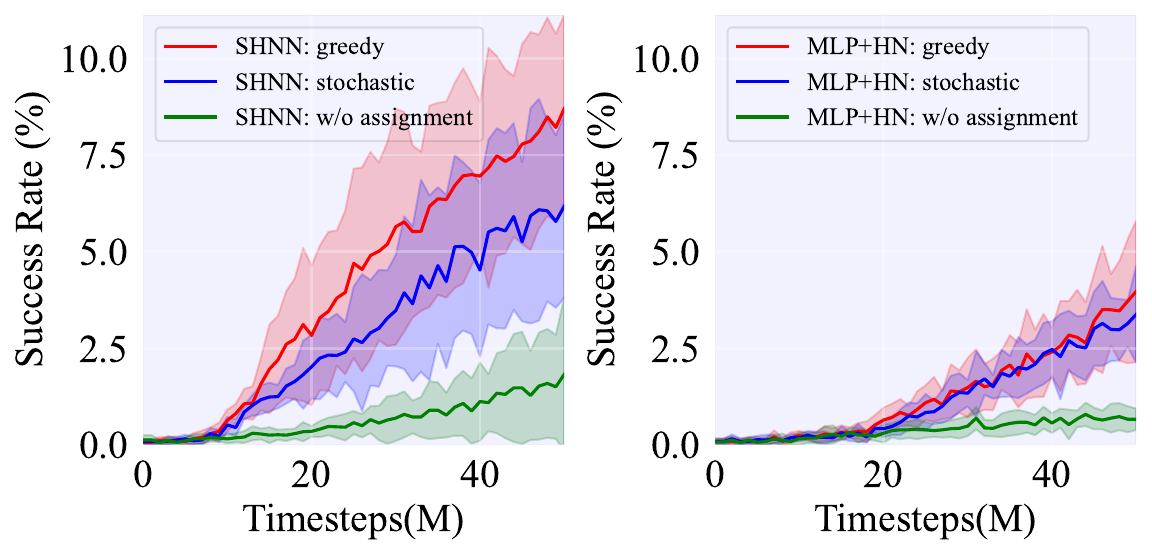}
\vspace{-.2in}
\caption{Ablations on Assignment. Training and Evaluation Curves in \texttt{3\_ant\_claw\_centipede} Team Reach. 
}
\label{fig:part_seg}
\end{figure}

\begin{figure}[t!]
\centering
\includegraphics[width=\linewidth]{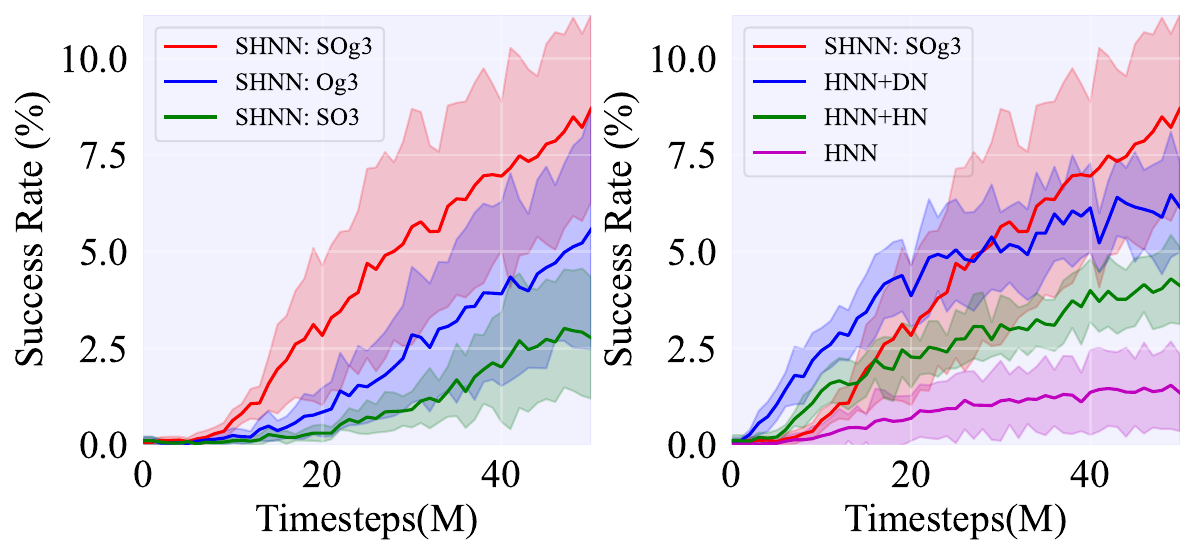}
\vspace{-.2in}
\caption{Ablations on Equivariance. Training and Evaluation Curves in \texttt{3\_ant\_claw\_centipede} Team Reach. 
}
\label{fig:reach_ablation}
\end{figure}

\subsection{Ablation Studies}

\paragraph{Ablations on Assignment}
\label{sec:oas}
We evaluate the impact of task assignment by comparing several variants, as depicted in \cref{fig:part_seg} and \cref{fig:sumo_ben_bipair}. We consider the following configurations:
\textbf{greedy}, a default model that employs a greedy strategy-based bipartite matching for task assignment;
\textbf{stochastic}, a variant based on stochastic assignments;
\textbf{w/o assignment}, a variant where no task assignment is performed, with the entity-level graph configured as a fully connected graph and sparse root mean $\vec{\bm{c}}= \frac{1}{N+M} \sum_{i=1}^{N+M}\vec{\bm{p}}_{i,1}$ serving as the origin of LRF in ~\cref{eq:lrf}.
Our empirical findings affirm that assignment consistently enhances performance across various neural networks architectures in the Team Reach environments. Specifically, MLP networks exhibit marked improvement when incorporating task assignments. For SHNN, the greedy graph matching strategy of task assignment significantly outperforms both stochastic and non-assignment strategies. This evidence underscores that our task assignment effectively facilitates task-specific dynamics, managing complex entity interactions by decoupling local transformations from the overall structures while compressing the search space through local physical symmetry.
For Team Sumo, see \cref{sec:add_cluster}.

\paragraph{Ablations on Equivariance}
\label{sec:ablation}
We ablate the following variants in \Cref{fig:reach_ablation} and \Cref{fig:sumo_ablation}: 
\textbf{SHNN:SOg3}, our full model, which is $\text{SO}_{\vec{\vg}}(3)$-equivariant;
\textbf{SHNN:Og3}, an $\text{O}_{\vec{\vg}}(3)$-equivariant variant, where $\mO_i = \vec{\mZ}'_i$ in \cref{eq:ort_matrix};
\textbf{SHNN:SO3}, a $\text{SO}(3)$-equivariant variant, where $\vec\vg$ is omitted in the computation in \cref{eq:subeq};
\textbf{HNN+DN}, a non-equivariant variant, where $\vec{\mZ}$ are treated as scalars, but which utilizes the goal direction to construct the LRF;
\textbf{HNN+HN}, a non-equivariant variant, where $\vec{\mZ}$ are treated as scalars, but which utilizes the agent's heading direction to construct the LRF;
\textbf{HNN}, a non-equivariant variant, where $\vec{\mZ}$ are treated as scalars.    
\textbf{1.} \emph{Which symmetry group works best within our network framework?} Comparative experiments between the $\text{SO}_{\vec{\vg}}(3)$ group and the $\text{O}_{\vec{\vg}}(3)$ group show that the positive impact of emphasizing rotational symmetry significantly outweighs any disadvantages from the reduced focus on reflection symmetry, particularly in terms of reducing the massive search space. Furthermore, the comparison between the $\text{SO}_{\vec{\vg}}(3)$ group and the $\text{SO}(3)$ group demonstrates the importance of sensing the direction of gravity in policy learning.
\textbf{2.} \emph{Can equivariant network methods replace or even surpass methods based on hand-crafted LRF?} Experiments demonstrate that whether using goal orientation or the agent's heading orientation to construct the LRF, the performance is inferior compared to ours. This indicates that for current mainstream neural networks, entity-level subequivariant message passing emerges as a simpler yet more effective alternative to hand-crafted LRF, providing a plug-in solution for equivariant modifications.

\begin{figure}[t!]
\centering
\includegraphics[width=0.8\linewidth]{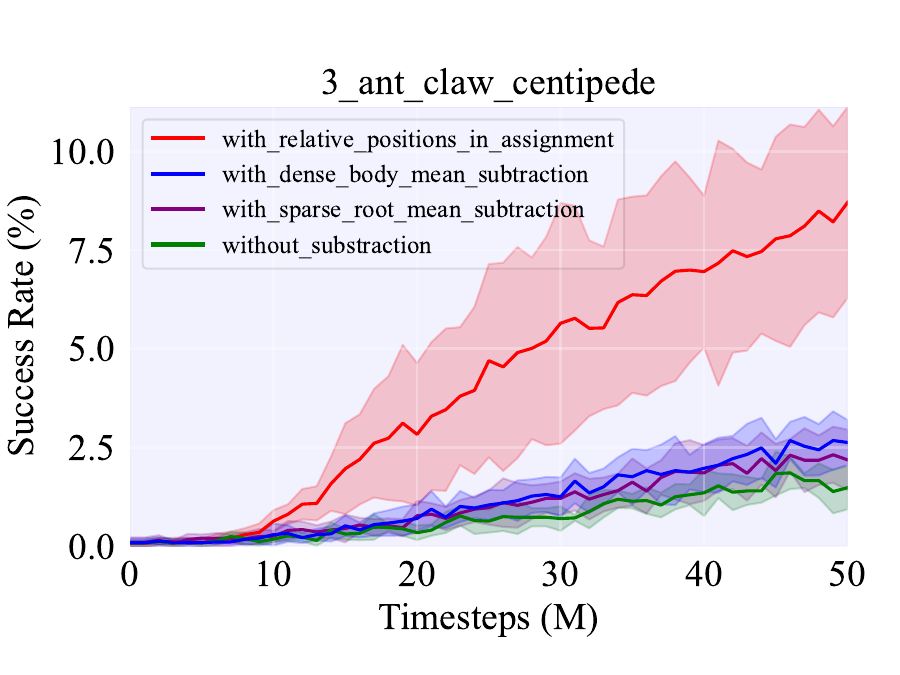}
\vspace{-.2in}
\caption{Local Translation Invariance. Training and Evaluation Curves in \texttt{3\_ant\_claw\_centipede} Team Reach Environment.
}
\label{fig:ablation_translation_3cc}
\vspace{-.13in}
\end{figure}

\paragraph{Importance of Local Symmetry}
The analysis of local symmetry in the Team Reach environments, illustrated by comparing the red and green lines in the left plot of \cref{fig:part_seg}, and the red and purple lines in the right plot of \Cref{fig:reach_ablation}, underscores the indispensable roles of task assignment and equivariance. Their integration is pivotal for surmounting the benchmark's challenges, validating the SHNN method's design philosophy.
To further substantiate the criticality of local symmetry, we embarked on ablation studies exploring the influence of local translation invariance with the following variants: \textbf{without subtraction}, where no translation invariance implemented; 
\textbf{with sparse root mean subtraction}:  $\vec{\bm{c}}= \frac{1}{N+M} \sum_{i=1}^{N+M}\vec{\bm{p}}_{i,1}$, i.e.,  without decoupling translation transformations;
\textbf{with dense body mean subtraction}, $\vec{\bm{c}}= \frac{1}{\sum_{i=1}^{N+M}\sum_{k=1}^{K_i}1} \sum_{i=1}^{N+M}\sum_{k=1}^{K_i}\vec{\bm{p}}_{i,k}$, i.e.,  without decoupling translation transformations; 
\textbf{with relative positions in assignment}, $\vec{\bm{c}}=\vec{\bm{p}}_{\mathcal{C}(i),1}$, i.e., with decoupling of translation transformations.
The results in \cref{fig:ablation_translation_3cc} clearly indicate that without achieving translation invariance or properly decoupling it, the model struggles to mitigate complexity challenges, leading to significantly poorer performance. This observation further emphasizes the pivotal importance of local symmetry.

\subsection{Analyses of Morphology-shared Policy}

\begin{table}[t!]
    \centering
    \scriptsize
    \caption{Analyses of Morphology-shared Policy. We report Success Rate (\%) on the final step in Team Reach Environments. }
    \label{tab:low_graph}
    \begin{tabular}{lcc}
    \toprule
    \textbf{Methods} & \texttt{2\_ants} & \texttt{3\_ant\_claw\_centipede} \\ \midrule
    SHNN & $\textbf{77.93} \pm 22.22$ & $8.70 \pm 2.42$ \\ 
    SHGNN& $41.41 \pm 29.47$ & $\textbf{31.07} \pm 26.84$ \\
    \bottomrule
    \end{tabular}
    \vspace{-.13in}
\end{table}

Previous works have achieved generalization across agents with different morphologies using morphology-aware Graph Neural Networks \cite{wang2018nervenet,huang2020one}.
In addition to the previously mentioned SHTransformer (our plug-in applied to body-level Transformer control), we have also developed a new variant, denoted as SHGNN (our plug-in applied to body-level message passing control). This variant learns a shared policy network across different agents, detailed in \cref{sec:sgnn}.
As observed in \cref{tab:low_graph} and \cref{tab:model_comparison}, despite body-level message passing excelling in complex, multi-morphological environments by enhancing knowledge sharing and performance, its implementation significantly slows down the training process.
Above all, our benchmark provides the community with a test bed that takes into account both agent-interaction and morphology-aware considerations.
\section{Limitations and Future Works}
To learn policies in 3D multi-entity physical environments, we propose the SHNN, a framework that uniquely integrates task assignment with local subequivariant message passing in a hierarchical structure. 
However, in the Team Sumo environments, as shown in \cref{fig:sumo_ben_bipair}, the impact of assignment is not as significant as that of equivariance. 
Therefore, the interdependence of task assignment and equivariance necessitates a novel co-learnable formulation.
Our model's reliance on the $\text{E}_{\vec{\bm{g}}}(3)$ symmetry encompasses a broad range of physical interactions but does not extend to non-Euclidean or highly irregular environments not governed by classical physics. Additionally, our model predominantly leverages state-based inputs, which are often costly to acquire in practice due to the need for precision sensors. Addressing the challenge of applying our equivariance-focused approach to vision-based inputs remains an open area for future research.

\section*{Acknowledgements}
This work is funded by 
the National Science and Technology Major Project of the Ministry of Science and Technology of China (No.2018AAA0102903),
and  partly by
THU-Bosch JCML Center.
We sincerely thank the reviewers for their comments that significantly improved our paper's quality. 

\section*{Impact Statement}
The research presented in this paper is a step toward advancing the field machine learning~\cite{yin2024distributed,guan2024automatic}. While our contributions focus on the theoretical and technical advancements in machine learning, we acknowledge the broader societal implications as our work indirectly supports the development of technologies in delivery, transportation, and automated systems. The practical applications, such as in autonomous vehicles and drones, offer promising avenues for societal benefits, including improved safety and efficiency. While our work heralds technical progress, we are mindful of its societal impacts, notably privacy concerns from using detailed state data, emphasizing the need for robust policies to ensure these technologies' ethical and secure societal integration.


\bibliography{example_paper}
\bibliographystyle{icml2024}

\newpage
\appendix
\onecolumn

\section{Proofs}
\label{sec:proof}
In this section, we theoretically prove that our proposed SHNN ensures the final output action and critic value preserve the symmetry as desired. 

\begin{theorem}
\label{prop:eqo1}
The learned entity-wise rotation matrix, denoted as $ \mO_i = \operatorname{OP}(\vec{\vu}_{i})$, are \(\text{SO}_{\vec{\vg}}(3)\)-equivariant, satisfying any transformation $g \in\text{SO}_{\vec{\vg}}(3)$, $g \cdot  \mO_i = \operatorname{OP}(g \cdot \vec{\vu}_{i})$. 
\end{theorem}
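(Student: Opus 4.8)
The plan is to verify the defining relation $g\cdot\mO_i = \operatorname{OP}(g\cdot\vec{\vu}_{i})$ column by column. First I would fix $g\in\text{SO}_{\vec\vg}(3)$, represented by a matrix $\mQ$ satisfying its three defining properties: $\mQ^\top\mQ=\mI$, $\mQ\vec\vg=\vec\vg$, and $\det\mQ=1$. Since gravity is aligned with the $z$-axis, the vector $\vec{\ve}_{i3}=[0,0,1]^\top$ points along $\vec\vg$, so the second property gives $\mQ\vec{\ve}_{i3}=\vec{\ve}_{i3}$, and left-multiplying this by $\mQ^\top$ together with orthogonality yields $\mQ^\top\vec{\ve}_{i3}=\vec{\ve}_{i3}$ as well. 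The group action on the input is $g\cdot\vec{\vu}_{i}=\mQ\vec{\vu}_{i}$, and because $\mO_i$ is a frame whose columns are steerable geometric vectors, the induced action on the output is left multiplication $g\cdot\mO_i=\mQ\mO_i$; the goal therefore reduces to establishing $\operatorname{OP}(\mQ\vec{\vu}_{i})=\mQ\,\operatorname{OP}(\vec{\vu}_{i})$.

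Next I would handle the three columns of the OP construction of \cref{eq:ortho} in turn. For the first axis I would use $\mQ^\top\vec{\ve}_{i3}=\vec{\ve}_{i3}$ to show the scalar projection is invariant, $\langle\mQ\vec{\vu}_{i},\vec{\ve}_{i3}\rangle=\langle\vec{\vu}_{i},\vec{\ve}_{i3}\rangle$, and then factor $\mQ$ out of the numerator via $\mQ\vec{\vu}_{i}-\langle\vec{\vu}_{i},\vec{\ve}_{i3}\rangle\vec{\ve}_{i3}=\mQ\bigl(\vec{\vu}_{i}-\langle\vec{\vu}_{i},\vec{\ve}_{i3}\rangle\vec{\ve}_{i3}\bigr)$, pulling $\mQ$ through the subtracted term using $\vec{\ve}_{i3}=\mQ\vec{\ve}_{i3}$. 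Orthogonality preserves the Euclidean norm in the denominator, so the normalization survives and the first axis transforms as $\vec{\ve}_{i1}\mapsto\mQ\vec{\ve}_{i1}$. The third axis is fixed, $\vec{\ve}_{i3}\mapsto\vec{\ve}_{i3}=\mQ\vec{\ve}_{i3}$, trivially.

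The crux is the second axis, defined by the cross product $\vec{\ve}_{i2}=\vec{\ve}_{i1}\times\vec{\ve}_{i3}$. Here I would invoke the identity $(\mQ\va)\times(\mQ\vb)=\det(\mQ)\,\mQ(\va\times\vb)$, valid for every $3\times3$ orthogonal $\mQ$; since $\det\mQ=1$ the determinant factor is unity, giving $\vec{\ve}_{i2}\mapsto\mQ\vec{\ve}_{i2}$. This is the main obstacle and the precise reason the statement is restricted to $\text{SO}_{\vec\vg}(3)$ rather than the full $\text{O}_{\vec\vg}(3)$: under an improper transformation the factor becomes $-1$, flipping $\vec{\ve}_{i2}$ and breaking equivariance, which is exactly the reflection symmetry the paper acknowledges the LRF construction sacrifices. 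Stacking the three transformed columns then gives $\operatorname{OP}(\mQ\vec{\vu}_{i})=[\mQ\vec{\ve}_{i1},\mQ\vec{\ve}_{i2},\mQ\vec{\ve}_{i3}]=\mQ[\vec{\ve}_{i1},\vec{\ve}_{i2},\vec{\ve}_{i3}]=\mQ\mO_i=g\cdot\mO_i$, completing the argument. One caveat worth flagging for well-definedness is that the construction presumes $\vec{\vu}_{i}$ is not parallel to $\vec\vg$, so the denominator is nonzero; since this non-degeneracy condition is itself $\text{SO}_{\vec\vg}(3)$-invariant, it holds simultaneously on both sides of the identity.
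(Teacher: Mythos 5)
Your proposal is correct and takes essentially the same route as the paper's proof: both verify $\operatorname{OP}(g\cdot\vec{\vu}_{i})=g\cdot\operatorname{OP}(\vec{\vu}_{i})$ column by column, using that the rotation fixes the gravity axis (so $\vec{\ve}_{i3}$ and the projection $\langle\vec{\vu}_{i},\vec{\ve}_{i3}\rangle$ are preserved), that orthogonality preserves norms so the normalization factors through, and that the cross-product identity with $\det = 1$ handles $\vec{\ve}_{i2}$ — including the same closing observation that improper transformations flip $\vec{\ve}_{i2}$, which is exactly why the statement is restricted to $\text{SO}_{\vec{\vg}}(3)$. Your added non-degeneracy caveat (that $\vec{\vu}_{i}$ must not be parallel to $\vec{\vg}$, a condition itself invariant under the group) is a small, sound refinement the paper leaves implicit.
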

\begin{proof}
To prove that the entity-wise rotation matrix $\mO_i = \operatorname{OP}(\vec{\vu}_{i})$ are \(\text{SO}_{\vec{\vg}}(3)\)-equivariant, we need to show that under any transformation $g \in \text{SO}_{\vec{\vg}}(3)$, the transformation of $\mO_i$ through $g$ is equivariant to the rotation matrix obtained from the transformed vectors $g \cdot \vec{\vu}_{i}$.

Let $g$ be a transformation in $\text{SO}_{\vec{\vg}}(3)$, which includes a rotation $\mO$ along the direction of $\vec\vg$. 
Specifically, the transformation is applied as follows:
\begin{align*}
    g \cdot \mO_i &= \mO \mO_i, \\
    g \cdot \vec{\vu}_{i} &= \mO \vec{\vu}_{i}, 
\end{align*}

Let $\mO_i^\ast = \operatorname{OP}(g \cdot  \vec{\vu}_{i})$. By the properties of the  orthogonalization process, we have:
\begin{align*}
    \vec{\ve}_{i1}^\ast &= \frac{\mO\vec{\vu}_{i} - \langle\mO\vec{\vu}_{i},\vec{\ve}_{i3}^\ast\rangle \vec{\ve}_{i3}^\ast}{\| \mO\vec{\vu}_{i} - \langle\mO\vec{\vu}_{i},\vec{\ve}_{i3}^\ast\rangle \vec{\ve}_{i3}^\ast \|}, \\
    \vec{\ve}_{i2}^\ast &= \vec{\ve}_{i1}^\ast \times \vec{\ve}_{i3}^\ast,\\
    \vec{\ve}_{i3}^\ast &= [0,0,1]^{\top}.
\end{align*}

Since $\mO\vec\vg=\vec\vg$, $\mO^\top\mO=\mI$ and $\det(\mO) = 1$, it preserves inner products and norms. And, the cross product obeys the following identity under matrix transformations:
$(M \Vec{\va}) \times(M \Vec{\vb})=(\operatorname{det} M)\left(M^{-1}\right)^{\mathrm{T}}(\Vec{\va} \times \Vec{\vb})$.
Therefore, $\vec{\ve}_{i3}^\ast=\mO\vec{\ve}_{i3}$, $\|\mO \vec{\vu}_{i}\| = \|\vec{\vu}_{i}\|$ and $\langle\mO\vec{\vu}_{i},\mO\vec{\ve}_{i3}\rangle = \langle\vec{\vu}_{i},\vec{\ve}_{i3}\rangle$. This implies:

\begin{align*}
    \vec{\ve}_{i1}^\ast &= \frac{\mO\vec{\vu}_{i} - \langle\mO\vec{\vu}_{i},\mO\vec{\ve}_{i3}\rangle \mO\vec{\ve}_{i3}}{\| \mO\vec{\vu}_{i} - \langle\mO\vec{\vu}_{i},\mO\vec{\ve}_{i3}\rangle \mO\vec{\ve}_{i3} \|} = \frac{\mO\vec{\vu}_{i} - \langle\vec{\vu}_{i},\vec{\ve}_{i3}\rangle \mO\vec{\ve}_{i3}}{\| \mO\vec{\vu}_{i} - \langle\vec{\vu}_{i},\vec{\ve}_{i3}\rangle \mO\vec{\ve}_{i3} \|} = \frac{\mO (\vec{\vu}_{i} - \langle\vec{\vu}_{i},\vec{\ve}_{i3}\rangle \vec{\ve}_{i3})}{\| \vec{\vu}_{i} - \langle\vec{\vu}_{i},\vec{\ve}_{i3}\rangle \vec{\ve}_{i3} \|} = \mO \vec{\ve}_{i1}, \\
    \vec{\ve}_{i2}^\ast &= \mO\vec{\ve}_{i1} \times \mO\vec{\ve}_{i3}= \det(\mO)\mO (\vec{\ve}_{i1} \times \vec{\ve}_{i3}) = \mO \vec{\ve}_{i3},\\
    \vec{\ve}_{i3}^\ast & = [0,0,1]^{\top} = \mO \vec{\ve}_{i3}.
\end{align*}

Therefore, with $\mO_i^\ast  = \operatorname{OP}(g \cdot  \vec{\vu}_{i}) =  \left[ \mO\vec{\ve}_{i1}, \mO\vec{\ve}_{i2}, \mO\vec{\ve}_{i3} \right] = \mO \mO_i = g \cdot \mO_i$, we confirm the \(\text{SO}_{\vec{\vg}}(3)\)-equivariance of $\mO_i$. 

Besides, for reflection transformations $\mO$, characterized by $\mO^\top\mO=\mI$ and $\det(\mO) = -1$, $\vec{\ve}_{i2}^\ast = -\mO \vec{\ve}_{i2} \rightarrow \mO_i^\ast \neq \mO \mO_i$ where the equivariance is disrupted due to the properties of the cross product. Hence, although we utilize an $\text{O}_{\vec{\vg}}(3)$-equivariant message passing network in \cref{eq:subeq}, the orthogonalization process can transform a $\text{O}_{\vec{\vg}}(3)$-equivariant vector into a $\text{SO}_{\vec{\vg}}(3)$-equivariant matrix.
\end{proof}

As for the actor and critic, we additionally have the following corollary.
\begin{corollary}
\label{prop:act-equ}
Let $\pi_{\theta_i}, V_{\phi}$ be output of the actor and the critic of SHNN with $\vec\mZ, \vec\vg, \vh$ as input. Let $\pi_{\theta_i}^\ast,  V_{\phi}^\ast$ be the actor and critic with $\mO\vec\mZ, \mO \vec\vg, \vh$ as input, $\mO\in\text{SO}_{\vec\vg}(3)$. Then, $(\pi_{\theta_i}^\ast, V_{\phi}^\ast)=(\pi_{\theta_i}, V_{\phi})$, indicating the output actor and critic preserve $\text{SO}_{\vec\vg}(3)$-invariance.
\end{corollary}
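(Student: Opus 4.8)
The plan is to propagate the group action through the SHNN pipeline stage by stage, showing that every quantity fed into the final policy and value MLPs is already $\text{SO}_{\vec\vg}(3)$-invariant, so that $\sigma_{\pi_i}$ and $\sigma_V$ produce identical outputs before and after the transformation. Fix a rotation $\mO\in\text{SO}_{\vec\vg}(3)$; by definition it satisfies $\mO^\top\mO=\mI$, $\det\mO=1$, and crucially $\mO\vec\vg=\vec\vg$. Applying $\mO$ to the geometric part of the input sends each $\vec\mZ_{i,k}\mapsto\mO\vec\mZ_{i,k}$ while leaving each scalar block $\vh_{i,k}$ fixed, which is exactly the input transformation stated in the corollary.

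First I would establish that the entity-level message passing $\varphi_o$ is equivariant in its vector output and invariant in its scalar output, i.e. $(\vec\mZ'_i,\vh'_i)\mapsto(\mO\vec\mZ'_i,\vh'_i)$. The edge constructions in \cref{eq:vector-interaction} and \cref{eq:scalar-interaction} behave correctly: relative positions $\vec\vp_{j,1}-\vec\vp_{i,1}$ rotate as vectors while their norm is invariant, so $\vec\mZ_{ij}\mapsto\mO\vec\mZ_{ij}$ and $\vh_{ij}$ is unchanged. For the subequivariant blocks $\phi_{\vec\vg},\psi_{\vec\vg}$ of the form \cref{eq:subeq}, I would substitute $\mO\vec\mZ$ for $\vec\mZ$ and use $\mO\vec\vg=\vec\vg$ to get $[\mO\vec\mZ,\vec\vg]=\mO[\vec\mZ,\vec\vg]$; the Gram matrix $[\mO\vec\mZ,\vec\vg]^\top[\mO\vec\mZ,\vec\vg]=[\vec\mZ,\vec\vg]^\top[\vec\mZ,\vec\vg]$ is then invariant, hence $\mM_{\vec\vg}$ is invariant and $f_{\vec\vg}(\mO\vec\mZ,\vh)=\mO f_{\vec\vg}(\vec\mZ,\vh)$. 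Closure of these properties under the summation in \cref{eq:message-aggregation} and the residual update in \cref{eq:update-mlp} then gives the claimed equivariance/invariance of $(\vec\mZ'_i,\vh'_i)$ by induction over the layers.

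Next I would feed this into the LRF stage. Since $\vec\vu_i=\vec\mZ'_i\mW_{\vec\vu}$ is linear, $\vec\vu_i\mapsto\mO\vec\vu_i$, and \Cref{prop:eqo1} immediately yields $\mO_i\mapsto\mO\mO_i$. The decisive computation is the invariance of the reframed observation: using $\vo_i\mapsto\mO\vo_i$ (geometric blocks rotate, scalar blocks fixed) together with orthogonality,
\begin{equation*}
(\mO\mO_i)^\top(\mO\vo_i)=\mO_i^\top\mO^\top\mO\,\vo_i=\mO_i^\top\vo_i=\vo'_i,
\end{equation*}
so $\vo'_i$ in \cref{eq:lrf} is invariant (the scalar blocks were invariant to begin with). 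Since both arguments of the actor $\pi_{\theta_i}=\sigma_{\pi_i}(\vo'_i,\vh'_i)$ and the sole argument of the critic $V_\phi=\sigma_V(\{\vh'_i\}_{i=1}^{N+M})$ are invariant, their MLP outputs coincide, giving $(\pi_{\theta_i}^\ast,V_\phi^\ast)=(\pi_{\theta_i},V_\phi)$.

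The main obstacle is the equivariance bookkeeping in the message-passing step rather than the final contraction: one must verify that the mixed edge features, which interleave rotating relative coordinates with the fixed gravity channel, commute with $\mO$, and this works only because $\mO$ stabilizes $\vec\vg$ — the same reason the construction is $\text{SO}_{\vec\vg}(3)$-invariant and not invariant under rotations that tilt $\vec\vg$. A secondary point worth stating carefully is that translation invariance (from subtracting $\vec\vc$) is orthogonal to this argument: $\text{SO}_{\vec\vg}(3)$ contains no translations, so the centering is untouched by $\mO$ and need not be re-derived here.
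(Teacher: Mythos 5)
Your proof is correct and follows essentially the same route as the paper's: propagate equivariance of the vector channels and invariance of the scalar channels through the message passing, invoke \Cref{prop:eqo1} to get $\mO_i \mapsto \mO\mO_i$, and conclude via the contraction $(\mO\mO_i)^\top(\mO\vo_i)=\mO_i^\top\vo_i$ that both MLP inputs are invariant. The only difference is one of self-containedness: where the paper cites prior work for the subequivariance of \cref{eq:subeq} (yielding ${\vh'_i}^\ast=\vh'_i$ and, implicitly, $\vec{\mZ}'_i{}^\ast=\mO\vec{\mZ}'_i$), you verify it directly via the Gram-matrix computation $[\mO\vec\mZ,\vec\vg]^\top[\mO\vec\mZ,\vec\vg]=[\vec\mZ,\vec\vg]^\top[\vec\mZ,\vec\vg]$, which is a welcome but inessential elaboration of the same argument.
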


\begin{proof}
Given the subequivariance of \cref{eq:subeq} as per ~\cite{han2022learning,chen2023sgrl}, we have ${\vh'_i}^\ast = \vh'_i$.

Hence,
\begin{align}
    V_{\phi}^\ast &= \sigma_{V}(\{{\vh'_i}^\ast\}_{i=1}^{N+M})\\
    &= \sigma_{V}(\{\vh'_i\}_{i=1}^{N+M})\\
    &= V_{\phi}.
\end{align}

Then, we get $\vo_{i}^\ast$ with $\mO\vec\mZ, \mO \vec\vg, \vh$ as input:
\begin{align}
    \vo_{i}^\ast &= [\{\mO \vec{\mZ}_{i,k}, \vh_{i,k}\}_{k=1}^{K_i}, \mO \vec{\mZ}_{j,1}, \vh_{j,1}].
\end{align}

By \Cref{prop:eqo1}, we can obtain $\mO_i^\ast = \mO \mO_i$ with $\mO\vec\mZ, \mO \vec\vg, \vh$ as input. 

Therefore, the LRF invariant observation inputs ${\vo'_{i}}^\ast$  with $\mO\vec\mZ, \mO \vec\vg, \vh$ as input:
\begin{align}
    {\vo'_{i}}^\ast &= {\mO_i^\ast}^{\top}\vo_{i}^\ast\\
    &= [\{{\mO_i^\ast}^{\top} \mO\vec{\mZ}_{i,k}, \vh_{i,k}\}_{k=1}^{K_i}, {\mO_i^\ast}^{\top} \mO\vec{\mZ}_{j,1}, \vh_{j,1}]\\
    &= [\{\mO_i^{\top} \mO^{\top} \mO \vec{\mZ}_{i,k}, \vh_{i,k}\}_{k=1}^{K_i}, \mO_i^{\top} \mO^{\top} \mO \vec{\mZ}_{j,1}, \vh_{j,1}]\\
    &= [\{\mO_i^{\top}  \vec{\mZ}_{i,k}, \vh_{i,k}\}_{k=1}^{K_i}, \mO_i^{\top}  \vec{\mZ}_{j,1}, \vh_{j,1}]\\
    &= \mO_i^{\top}\vo_{i} = \vo'_{i}.
\end{align}

Hence,
\begin{align}
    \pi_{\theta_i}^\ast &= \sigma_{\pi_i}({\vo'_{i}}^\ast, {\vh'_i}^\ast)\\
    &= \sigma_{\pi_i}(\vo'_{i}, \vh'_i)\\
    &= \pi_{\theta_i}.
\end{align}
\end{proof}

\section{Related Works}
\label{sec:related}

\paragraph{Morphology-based RL}
In the field of reinforcement learning (RL), recent years have witnessed the emergence and evolution of morphology-based approaches, particularly within the context of inhomogeneous morphology settings. This setting is distinguished by varying state and action spaces across different tasks~\cite{devin2017learning,chen2018hardware,d2020sharing}. Morphology-based RL decentralizes the control of multi-joint robots by learning a shared policy for each joint, applying a multitude of message-passing strategies. To tackle the challenges of the inhomogeneous setting, methods such as NerveNet~\cite{wang2018nervenet}, DGN~\cite{pathak2019learning}, and SMP~\cite{huang2020one} represent the agent's morphology as a graph and implement Graph Neural Networks (GNNs) for their policy networks. On the other hand, Amorpheus~\cite{kurin2020cage}, SWAT~\cite{hong2021structure}, ModuMorph~\cite{xiong2023universal}, SGRL~\cite{chen2023sgrl}, and Solar~\cite{dong2022low} opt for transformers over GNNs for direct communication. Both approaches demonstrate that graph-based policies offer significant benefits compared to conventional monolithic policies. Furthermore, MxT-Bench~\cite{furuta2023asystem} serves as a testbed for morphology-task generalization, though its primary focus remains on single-agent tasks. Our work, however, expands upon these methodologies to encompass multi-entity environments, facilitating comprehensive exploration of dynamics and interactions among a variety of entities, including agents with intricate morphologies, objects, and other crucial factors affecting system symmetry and behaviors.

\paragraph{Multi-Agent RL}
In the realm of Multi-Agent RL (MARL), decision-making is predominantly guided by frameworks such as the decentralized partially observable Markov decision process (Dec-POMDP) \cite{bernstein2002complexity, oliehoek2016concise,lechner2023gigastep}, with strategies typically categorized into cooperative \cite{xu2023haven, rashid2018qmix}, competitive \cite{bansal2018emergent}, and mixed interactions \cite{lowe2017multi}. Addressing challenges like high-dimensional action spaces and the necessity for agent coordination, decentralized learning \cite{littman1994markov, foerster2016learning} aims to independently optimize each agent's policy. While scalable, this approach often struggles with issues of instability, especially under conditions of partial observability. On the other hand, centralized learning \cite{claus1998dynamics, kraemer2016multi} offers a more comprehensive view but faces computational hurdles in complex environments. The Centralized Training with Decentralized Execution (CTDE) paradigm \cite{lowe2017multi,sunehag2018value,rashid2018qmix,foerster2018counterfactual,iqbal2019actor,kuba2021trust,wen2022multi,yu2022surprising,jeon2022maser} 
emerges as a balanced solution to these extremes. It allows agents to make decisions based on local observations, while global state information is utilized in the construction of the value function. This synergy of individual autonomy and collective insight enhances the efficacy of the decision-making process. Notably, \citet{yu2022surprising} revisited the application of Proximal Policy Optimization (PPO) in MARL within the CTDE structure, achieving remarkably strong performance. 
Moreover, existing multi-agent reinforcement learning (MARL) environments~\cite{samvelyan2019starcraft,de2020deep,ellis2022smacv2,flair2023jaxmarl,lechner2023gigastep} often lack scenarios that encompass both geometric symmetry and morphology-based control.
We expand existing environments~\cite{chen2023sgrl,furuta2023asystem} into a suite of new multi-entity benchmark (MeBen) in 3D space, not only facilitating a comprehensive exploration of multi-entity dynamics but also ensuring a realistic emulation of real-world scenarios through randomized initial conditions and orientations. 
Our work aligns with the Dec-POMDP framework and capitalizes on the strengths of MAPPO \cite{yu2022surprising} to optimize policies effectively. Furthermore, our environment is implemented using JAX~\cite{jax2018github}, ensuring efficient simulations on advanced hardware accelerators such as GPUs and TPUs.

\paragraph{Geometrically Equivariant Models}
The physical world exhibits specific symmetries, extensively explored in studies on group equivariant models~\cite{cohen2016group, cohen2016steerable, worrall2017harmonic}. Building upon this, the field of geometrically equivariant graph neural networks~\cite{han2024survey} has emerged, utilizing symmetry as a fundamental bias in learning. These models are designed to ensure that outputs will rotate, translate, or reflect in the same manner as their inputs, thereby retaining inherent symmetries. Techniques such as group convolution via irreducible representation~\cite{thomas2018tensor,fuchs2020se3} and invariant scalarization methods, like inner product computation~\cite{villar2021scalars,satorras2021en,huang2022equivariant,han2022learning}, are employed to achieve this symmetry preservation.
Our method, similar to GMN~\cite{huang2022equivariant} and SGNN~\cite{han2022learning}, especially focuses on scalarization strategies. In Markov decision processes (MDPs) that exhibit symmetries~\cite{van2020mdp}, these symmetries in the state-action space enable the optimization of policies within a simplified abstract MDP. The work of \citet{van2020mdp} concentrates on learning equivariant policies and invariant value networks in 2D environments. In contrast, \citet{chen2023sgrl} explores body-level equivariant policy networks in more complex 3D physics environments, facilitating policy generalization across different directions.
Our work diverges by introducing an entity-level subequivariant message-passing mechanism, which proves to be highly effective in 3D multi-body scenes~\cite{han2022learning}.

\section{More Experimental Details}\label{sec:details}

\subsection{Subequivariant Function}\label{sec:subeq}
We resort to subequivariant function with $\vec\mZ, \vec\vg, \vh$ as input~\cite{han2022learning,chen2023sgrl} to instill desired geometric symmetry into the model:
\begin{align}
\label{eq:subeq2}
\begin{aligned}
    &\vec{\mZ}', \vh'=\Vec{\mM}_{\vec{\vg}}\mW_{\vec{\vg}}, \mW_{\vec{\vg}},\\
    &\text{s.t.}\quad \mW_{\vec{\vg}}=  \sigma(\Vec{\mM}_{\vec{\vg}}^{\top}\Vec{\mM}_{\vec{\vg}},\vh),
\end{aligned}
\end{align}
where $\Vec{\mM}_{\vec{\vg}}=[\Vec{\mZ}, \vec{\vg}]\mW$is a mixing of the vectors to capture the interactions between channels, with a learnable weight matrix $\mW \in\sR^{(m+1)\times m}$ and $[\Vec{\mZ}, \vec{\vg}]\in\sR^{3\times (m+1)}$ is a stack of $\vec\mZ$ and $\vec\vg$ along the last dimension. The inner product $\Vec{\mM}_{\vec{\vg}}^{\top}\Vec{\mM}_{\vec{\vg}}\in\sR^{m\times m}$ is computed and concatenated with $\vh$. The resultant invariant term is then transformed by a Multi-Layer Perceptron (MLP)) $\sigma:\sR^{m\times m + h}\mapsto\sR^{m\times m}$ producing $\mW_{\vec{\vg}}\in\R^{m\times m}$. 

\subsection{Environments}\label{sec:envs}
In this subsection, we present the technical details involved in constructing our challenging \textsc{MeBen}. The environments utilized in this work are listed in \cref{tab:environments}.

\paragraph{Agents}
In our studies, we leverage a variety of morphologies, including ants, claws, and centipedes from MxT-Bench~\cite{furuta2023asystem}, as well as unimals from \citet{gupta2022metamorph}. Significantly, the asymmetrical forms of centipedes and unimals have the potential to influence the overall system’s symmetry and dynamics.  
This diversity in agent morphologies facilitates a nuanced exploration of agent dynamics within multi-entity, morphology-based RL environments.

\begin{figure}[ht]
\centering
\includegraphics[width=0.8\linewidth]{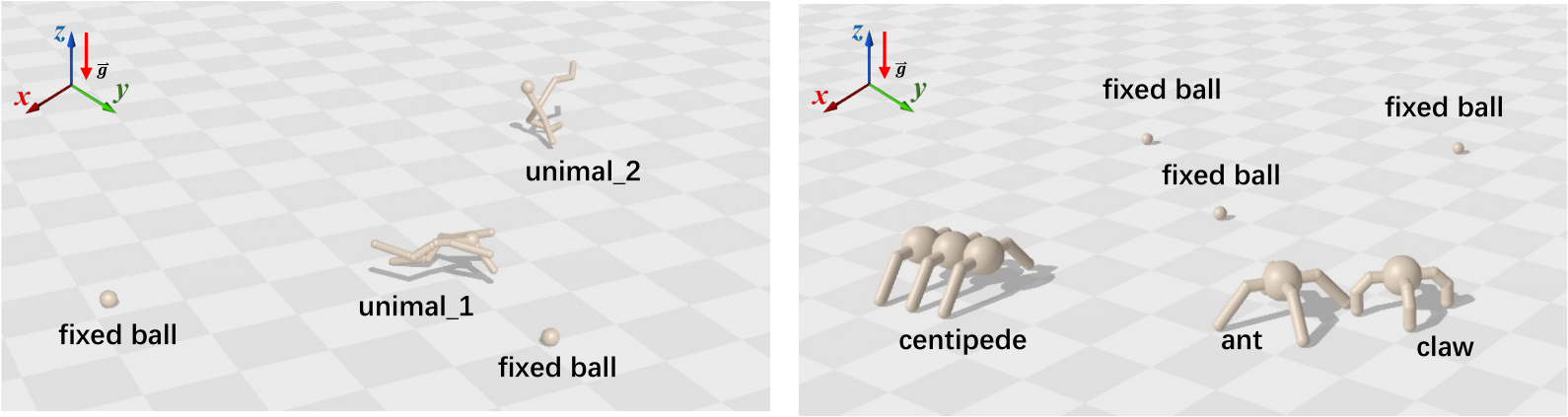}
\caption{Team Reach Environments. Here, displaying \texttt{2\_unimals} and \texttt{3\_ant\_claw\_centipede} environments.}
\label{fig:reach_demo}
\vspace{-.13in}
\end{figure}

\paragraph{Team Reach}
We expand the ``Reach'' task from a single-agent challenge to a collaborative ``Team Reach'' task, as shown in \cref{fig:reach_demo}. 
\textbf{1. Initial Conditions.} Entities set $\Omega$ include \( N \) agents and \( M \) fixed balls, $N \geq M$. Within an area of radius \( R \), we randomly position \( N \) agents and \( M \) fixed balls, also setting the initial orientations of the agents randomly.
Specific details for \( N \), \( M \), \( R \), and the agents' morphology are provided in \cref{tab:environments}.
\textbf{2. Termination.} The goal of this task is for \( M \) fixed balls to be \emph{simultaneously} occupied by agents. An episode is considered successfully completed and terminates once this condition is fulfilled. 
\textbf{3. Reward.}
The designed reward structure comprises four components:
\textbf{a. Success Bonus:} A significant sparse reward of 10,000 is awarded.
\textbf{b. Distance Reward:} A dense reward to incentivize the achievement of the task objective. It is computed as \( 5 \times \sum_{j=1}^{M} \exp(-\text{dist}_j) \), where \(\text{dist}_j\) represents the distance from the nearest agent to ball \( j \).
\textbf{c. Moving Reward:} Designed to motivate agents to move closer to any ball, it is quantified as \( 0.2 \times \sum_{i=1}^{N}\sum_{j=1}^{M} \max(\vec{\vv}_{i} \cdot (\vec{\vp}_{j} -  \vec{\vp}_{i}), 0) \).
\textbf{d. Control Cost:} This penalty discourages agents from executing excessively large actions, and is calculated as \( -0.2 \times \sum_{i=1}^{N}\sqrt{\sum_{k=1}^{K_i}(a_{i,k})^2} \).
Thus, the shared team reward is equal to the sum of the aforementioned rewards.

\begin{figure}[ht]
\centering
\includegraphics[width=0.8\linewidth]{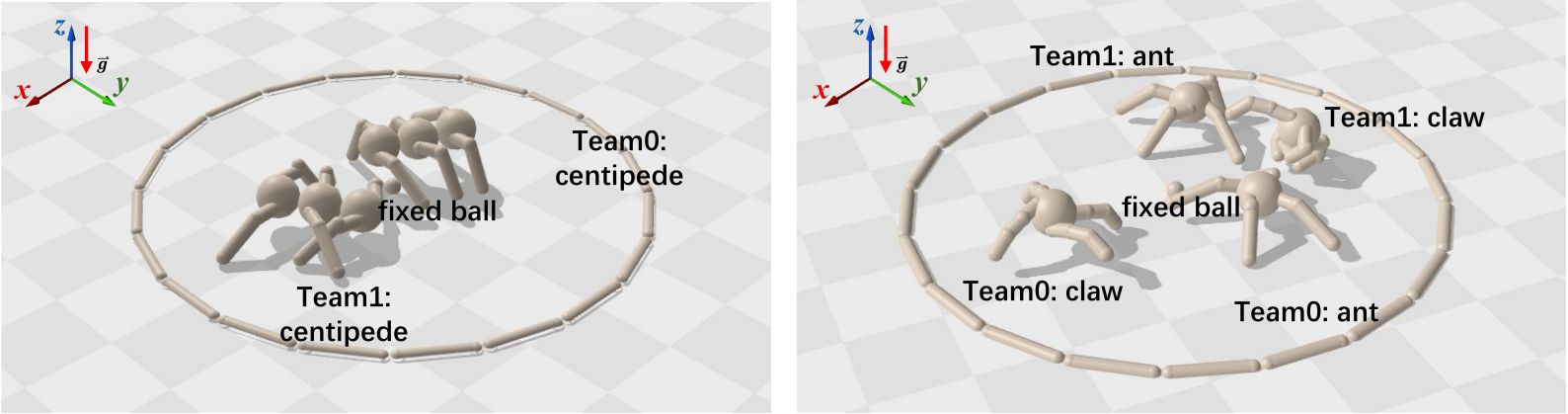}
\caption{Team Sumo Environments. Here, displaying \texttt{1\_centipede} and \texttt{2\_ant\_claw} environments.}
\label{fig:sumo_demo}
\vspace{-.13in}
\end{figure}

\paragraph{Team Sumo}
We evolve the ``Sumo'' task from a purely competitive challenge to a mixed cooperative-competitive ``Team Sumo'' task, as shown in \cref{fig:sumo_demo}. 
\textbf{1. Initial Conditions.} Entities set $\Omega$ comprise one fixed ball, \( N \) agents forming Team 1, and another \( M \) agents constituting Team 2.
The sumo arena, a circle with radius \( R \), has its center marked by the fixed ball.
Around this fixed ball, within a radius of \( R-1 \), we randomly position  \( N \) agents from Team 1 and another \( M \) from Team 2, ensuring each agent's orientation is also randomized. 
Specific details for \( N \), \( M \), \( R \), and the agents' morphology are provided in \cref{tab:environments}.
\textbf{2. Termination.} The objective is for either Team 1 or Team 2 to win by having an opposing team's agent disqualified, which occurs if it exceeds a distance \( R \) from the fixed ball. The team with the disqualified agent loses, triggering the termination of the episode. 
\textbf{3. Reward.}
The reward designed for Team 1 (for Team 2, simply swap $N$ and $M$) is divided into five parts:
\textbf{a. Win Bonus:} A sparse reward of 1000 for achieving the win condition.
\textbf{b. Lose Bonus:} A sparse penalty of -1000 for the losing condition.
\textbf{c. Distance Reward:} A dense reward to encourage achieving the task's objective, computed as \( 5 \times \exp(\text{dist} - R) \), where \(\text{dist}\) denotes the distance of the farthest agent from the opposing team to the fixed ball.
\textbf{d. Moving Reward:} This motivates agents to move closer to any member of the opposing team, calculated as \( 5 \times \sum_{i=1}^{N}\sum_{j=1}^{M} \max(\vec{\vv}_{i} \cdot (\vec{\vp}_{j} -  \vec{\vp}_{i}), 0) \).
\textbf{e. Control Cost:} A penalty for excessively large actions by agents, quantified as \( -0.1 \times \sum_{i=1}^{N}\sqrt{\sum_{k=1}^{K_i}(a_{i,k})^2} \).
In line with the Exploration Curriculum suggested by \citet{bansal2018emergent}, the shared team reward is calculated as \( \alpha \times \text{dense reward} + (1 - \alpha) \times \text{sparse reward} \), where \( \alpha \) is a linear annealing factor. Agents train with the dense reward for 25\% of the training epochs.

\begin{table}[ht]
\centering
\caption{Full list of environments used in this work.}
\begin{tabular}{c c c c c c}
\toprule
\textbf{Benchmark} & \textbf{Environment} &  \textbf{Morphology}  & $N$ & $M$ & $R$ (meter) \\
\midrule
\multirow{6}{*}{Team Reach } & \texttt{1\_ant} & ant & 1 & 1 & 5 \\ 
& \texttt{1\_centipede} & centipede & 1 & 1  & 5\\ 
& \texttt{2\_ants} & ant & 2 & 2 &  3\\ 
& \texttt{2\_ant\_claw} & ant, claw & 2 & 2 &   5\\ 
& \texttt{2\_unimals} & unimal\_1, unimal\_2 & 2 & 2 & 5 \\
& \texttt{3\_ant\_claw\_centipede} & ant, claw, centipede & 3 & 3 &  5 \\ 
\midrule
\multirow{4}{*}{Team Sumo} & \texttt{1\_centipede} \, vs \, \texttt{1\_centipede} & centipede & 1 & 1 &  3\\ 
& \texttt{2\_ants} \, vs \, \texttt{2\_ants} & ant & 2 & 2 &   3\\ 
& \texttt{2\_ant\_claw} \, vs \, \texttt{2\_ant\_claw} & ant, claw & 2 & 2 & 3 \\ 
& \texttt{3\_ants} \, vs \, \texttt{3\_ants} & ant & 3 & 3 &   4\\ 
\bottomrule
\end{tabular}
\label{tab:environments}
\end{table}

\subsection{Baselines}\label{sec:baselines}
We compare our method SHNN against mainstream neural networks, particularly MLP as utilized in ~\cite{furuta2023asystem}, and a variant employing heading normalization technique, denoted as MLP+HN~\cite{chen2023sgrl}.

\paragraph{MLP} Our implementation of MLP consists of three linear layers. The middle two layers are activated by ReLU activation functions. The input layer is designed to accommodate the size of the observation space, while the output layer matches the dimensionality of the action space. The intermediate layers facilitate the non-linear transformation of input features. For specific hyperparameters, please refer to ~\cref{tab:nethyp}.

\paragraph{MLP+HN}  Heading Normalization technique has been widely utilized in the 3D RL literature. For instance, in morphology control, the presence of gravity allows for the normalization of state and action spaces in the heading (yaw) direction, as demonstrated in a recent work~\cite{won2020scalable,won2022physics,chen2023sgrl}. This heading normalization (HN) technique transforms the global coordinate frame into the LRF, enabling the input geometric information to be mapped to a rotation- and translation-invariant representation.  

Specifically, we acquire the quaternion of the root body from the simulation environment and calculate the heading angle to construct the rotation matrix $\mO_i$. This matrix is then multiplied by $\vec{\mZ}_{i,k}$ to transform it into an invariant representation. Finally, all invariant representations are input into the MLP. It is noteworthy that \citet{chen2023sgrl} discusses the limitations of this technique in their appendix.

\begin{algorithm}[ht]
  \caption{Greedy Bipartite Matching for Task Assignment}\label{alg:greedy_bipartite_ocp}
\begin{algorithmic}
    \STATE \textbf{Input}: $\vec{\vp}_{i,1}, i \in \Omega$, \( \gN \) \COMMENT{agents (Team0) set}, \( \gM  \) \COMMENT{objects (Team1) set}
    \STATE \textbf{Output}: A local entity-level graph \( \gG = (\gV, \gE) \)
    
    \STATE \emph{Initialize the graph} \( \gG \) with vertices \( \gV \gets \Omega \) and edges \( \gE \gets \emptyset \)
    
    \STATE \emph{Initialize the assignment labels}  \( \mathcal{C} \gets \emptyset \)
    
    \STATE \emph{Initialize distance matrix} \( D \) between \( \gN  \) and \( \gM  \) 
    \FOR{each \( i \in \gN \) and \( j \in \gM \)}
       \STATE \( D[i,j] \gets \|\vec{\vp}_{j,1} - \vec{\vp}_{i,1}\|_2 \)
    \ENDFOR
    
    \FOR{each \( j \in \gM \)}
        \STATE \( i = \argmin_{k \in \gN} D[k,j] \)
        \STATE \( D[i,:] \gets \infty \)  \COMMENT{Prevent re-matching}
        \STATE Add \( (i, j) \) to \( \gE \): \( \mathcal{C}(i) = i, \mathcal{C}(j) = i \)
    \ENDFOR
    
    \STATE \textbf{return} A local entity-level graph \( \gG = (\gV, \gE) \)
\end{algorithmic}
\end{algorithm}

\subsection{Task Assignment}

\paragraph{Greedy Bipartite Matching Algorithm}
The Greedy algorithm is a fundamental optimization algorithm, this part is about the process of applying the greedy matching algorithm to allocate a fixed ball (or an opposing agent) to each agent in the environments. \Cref{alg:greedy_bipartite_ocp} provides a pseudo-code implementation of the Greedy Bipartite Matching algorithm.

\subsection{Additional Ablations}

\paragraph{Additional Ablations on Assignment} \label{sec:add_cluster}
In the Team Sumo environments, task assignment yields limited improvement.
This could be attributed to the environments' complex mixture of cooperative and competitive dynamics, which are challenging to effectively decouple using bipartite matching methods, as demonstrated in ~\cref{fig:sumo_ben_bipair}.

\begin{figure}[htpb!]
\centering
\includegraphics[width=0.6\linewidth]{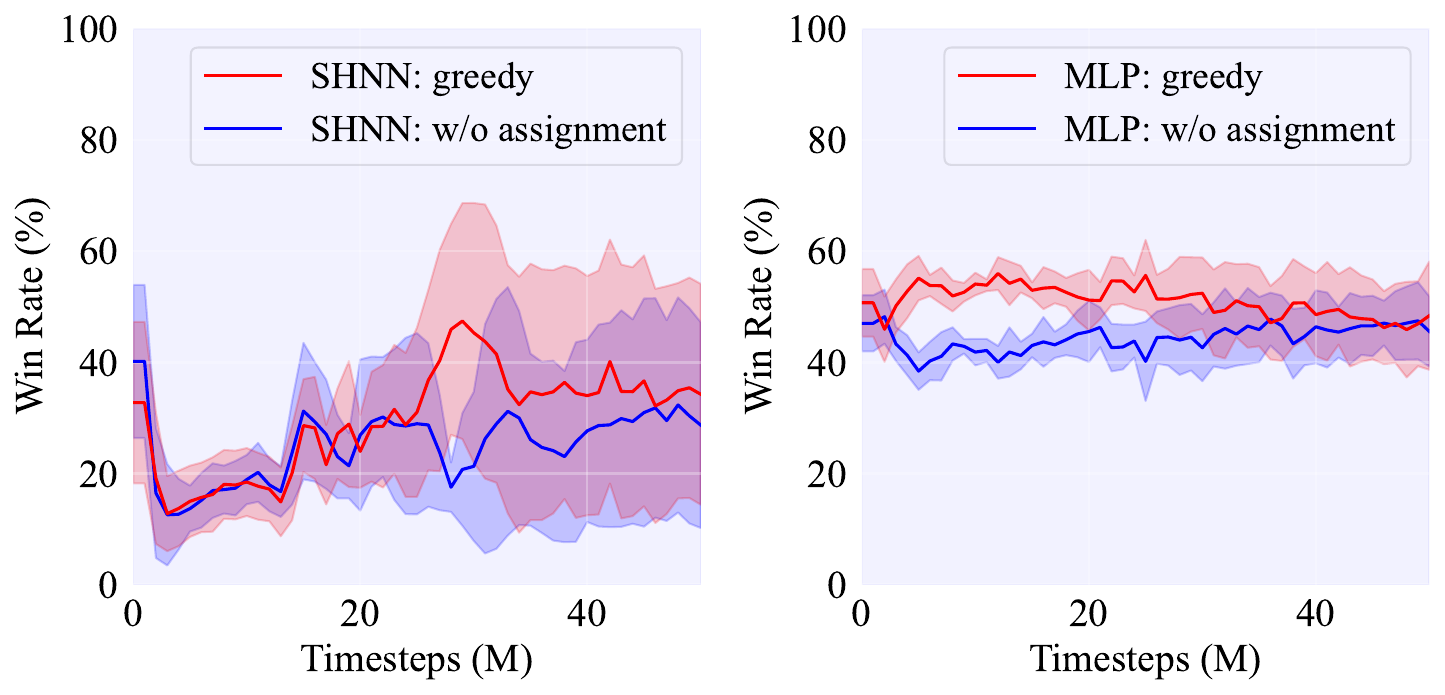}
\vspace{-.12in}
\caption{Additional Ablations on Assignment. Training and Evaluation Curves in \texttt{2\_ants} Team Sumo Environment. 
}
\label{fig:sumo_ben_bipair}
\vspace{-.13in}
\end{figure}

\paragraph{Additional Ablations on Equivariance} 
In the Team Sumo environments, task assignment leads to the formation of local graphs composed of triplets of agents from both sides and a fixed ball, making it challenging to determine the specific goal orientation learned by equivariant networks. Consequently, our method was primarily compared against HNN+HN. The experimental outcomes align with the main text findings, as demonstrated in ~\cref{fig:sumo_ablation}.

\begin{figure}[htpb!]
\centering
\includegraphics[width=0.7\linewidth]{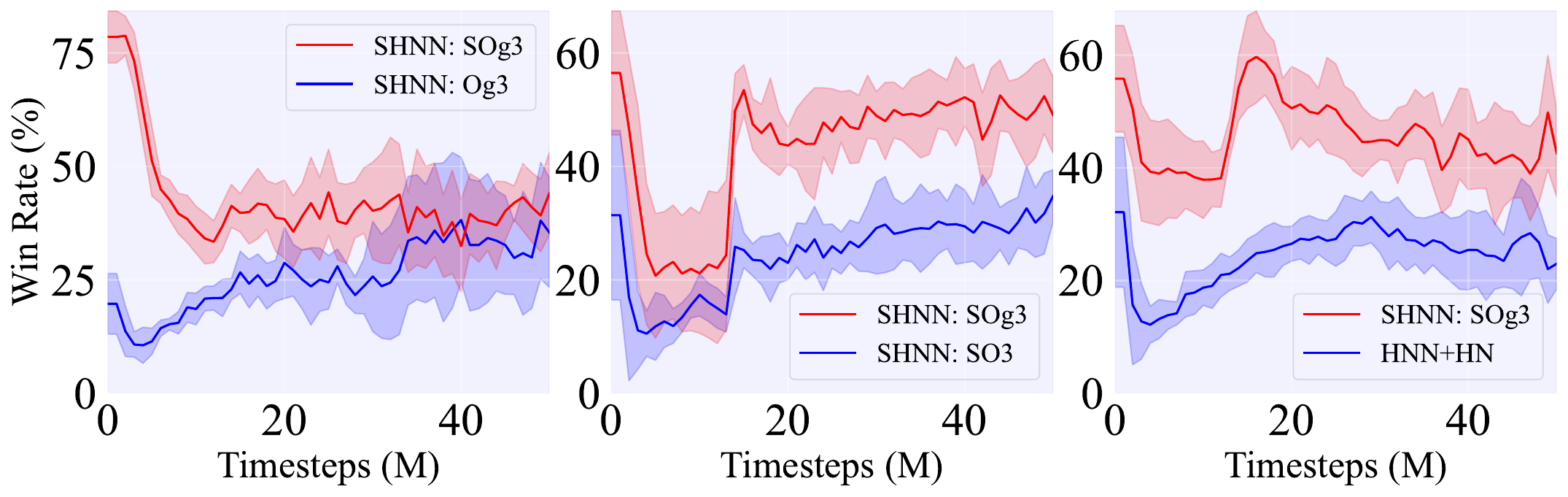}
\vspace{-.12in}
\caption{Additional Ablations on Equivariance. Training and Evaluation Curves in \texttt{2\_ants} Team Sumo Environment.
}
\label{fig:sumo_ablation}
\vspace{-.13in}
\end{figure}

\paragraph{Importance of Local Symmetry} 
By comparing the red and blue lines in the left plot of \cref{fig:sumo_ben_bipair} and in the right plot of \Cref{fig:sumo_ablation}, it becomes evident that in the Team Sumo environments, the impact of assignment is not as significant as that of equivariance. 
Therefore, careful design of assignment strategies is crucial to harness the advantages offered by equivariance effectively.

\begin{figure}[htpb!]
\centering
\includegraphics[width=\linewidth]{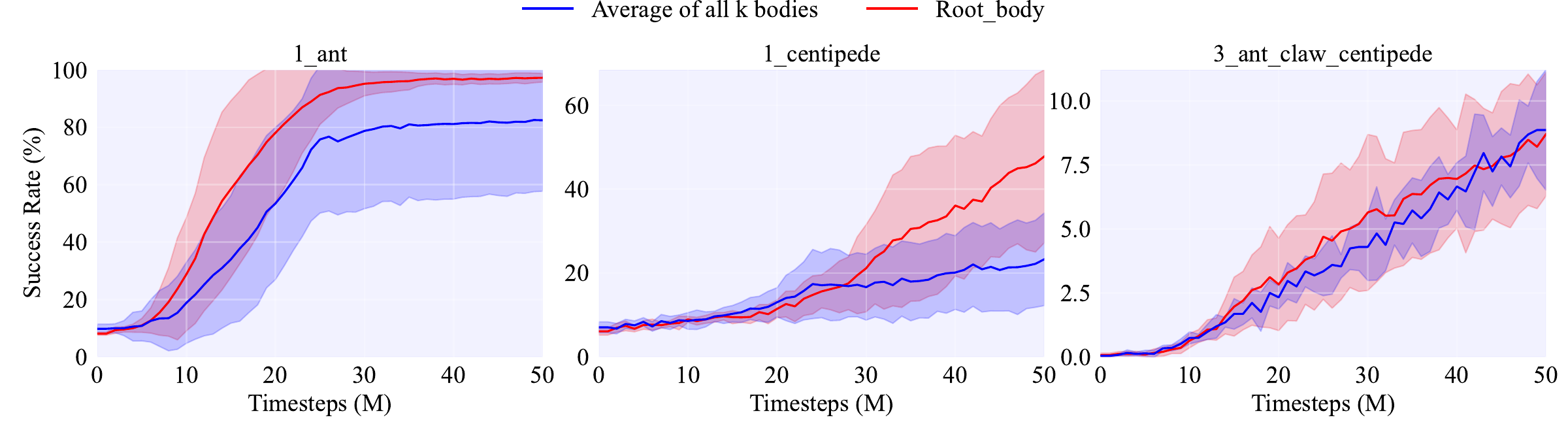}
\vspace{-.12in}
\caption{Additional Ablations on Entity Abstraction. Training and Evaluation Curves in \texttt{1\_ant},  \texttt{1\_centipede}, and \texttt{3\_ant\_claw\_centipede} Team Reach Environments.
}
\label{fig:ablation_translation}
\vspace{-.13in}
\end{figure}

\paragraph{Ablation Study on Entity Abstraction} 
The additional ablation study examines the effects of using different abstraction for each entity. Our hierarchical structure dictates a representative abstraction for each entity, and within RL environments, the reward is often contingent on the root body's state. Utilizing the first (root) body as a representative aligns with the common computational practices in RL, focusing on the pivotal elements that influence agent behavior and reward structures. 
Moreover, representing entity-level information through the mean of all K bodies is a viable alternative, offering a more comprehensive local dynamic. The ablation study examines the effects of using the root body versus the average of all K bodies:
\textbf{Root body}: $\vec{\bm{Z}}_i$ is assigned as $\vec{\bm{Z}}_{i,1}$,  $\bm{h}_i$ is set as $[\bm{h}_{i,1}, \vec{\bm{p}}_{i,1}^z]$, $\vec{\bm{Z}}_{ij} = [(\vec{\bm{p}}_{j,1} - \vec{\bm{p}}_{i,1}),\vec{\bm{Z}}_{i},\vec{\bm{Z}}_{j}]$, and $\bm{h}_{ij} = [\|\vec{\bm{p}}_{j,1} - \vec{\bm{p}}_{i,1}\|_2, \bm{h}_{i}, \bm{h}_{j}]$;
\textbf{Average of all $K$ bodies}: $\vec{\bm{Z}}_i$ is assigned as $\frac{1}{K_i}\sum_{k=1}^{K_i}\vec{\bm{Z}}_{i,k}$,  $\bm{h}_i$ is set as $[\frac{1}{K_i}\sum_{k=1}^{K_i}\bm{h}_{i,k}, \frac{1}{K_i}\sum_{k=1}^{K_i}\vec{\bm{p}}_{i,k}^z]$,$\vec{\bm{Z}}_{ij} = [(\frac{1}{K_j}\sum_{k=1}^{K_j}\vec{\bm{p}}_{j,k} -\frac{1}{K_i}\sum_{k=1}^{K_i} \vec{\bm{p}}_{i,k}),\vec{\bm{Z}}_{i},\vec{\bm{Z}}_{j}]$, and $\bm{h}_{ij} = [\|\frac{1}{K_j}\sum_{k=1}^{K_j}\vec{\bm{p}}_{j,k} - \frac{1}{K_i}\sum_{k=1}^{K_i}\vec{\bm{p}}_{i,k}\|_2, \bm{h}_{i}, \bm{h}_{j}]$. 
The empirical results in \Cref{fig:ablation_translation}, particularly for the asymmetric \texttt{1\_centipede} scenario, reveal a significant performance gap, clearly demonstrating the superiority of using the root body's state over the average of all K bodies.

\begin{figure}[htpb!]
\centering
\includegraphics[width=0.8\linewidth]{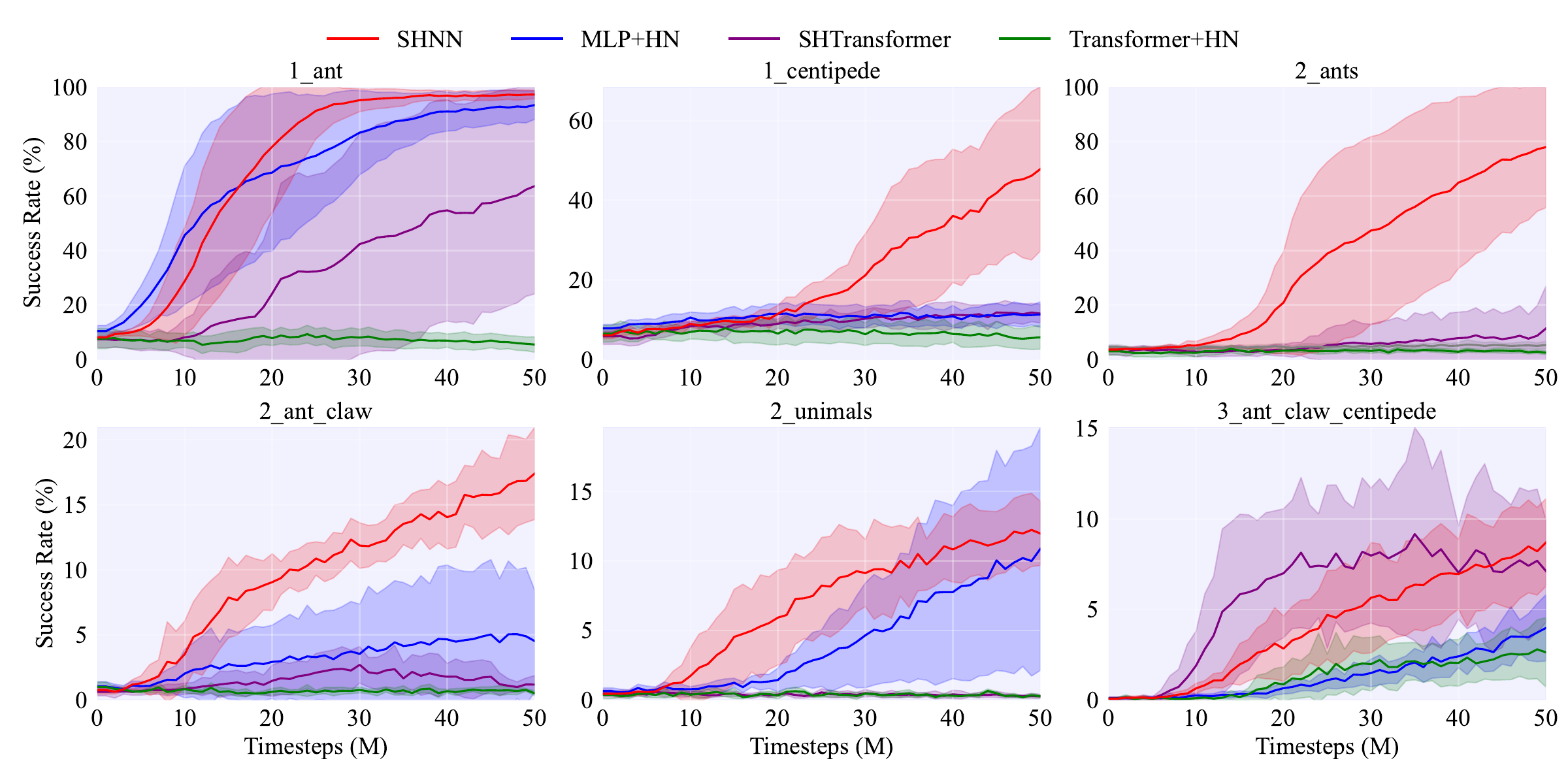}
\vspace{-.12in}
\caption{Evaluations on Transformer Architecture. Training and Evaluation Curves in Team Reach Environments. 
}
\label{fig:transformer}
\vspace{-.13in}
\end{figure}

\subsection{Analyses of Morphology-shared Policy}
\label{sec:sgnn}
We provide an extended example of our methodology applied to morphology tasks.
Previous works have achieved generalization across agents with different morphologies using morphology-aware Graph Neural Networks \cite{wang2018nervenet,huang2020one}.

Our variant, referred to as SHGNN, replaces the body-level MLP with body-level message passing. This adaptation enables the learning of a shared policy network across different agents.

First, we utilize the morphology topology information of each agent \(i\) to construct a body-level inner-entity graph, denoted as \( \gG_i = (\gV_i, \gE_i) \).
For each body, \( k \in \gV_i\), input node features are initialized using the body's state. Specifically, \( \vec{\mZ}_k \) is assigned as \( \vec{\mZ}_{i,k} \), and \( \vh_k \) is set as \( [\vh_{i,k}, \vh'_i, \vec{\vp}_{i,k}^z] \), where [ ] is the stack along the last dimension and \( \vec{\vp}_{i,k}^z \) represents the projection of the coordinate \( \vec{\vp}_{i,k} \) onto the \( z \)-axis. 
In a body-level overview, we denote our body-level message passing as the function $\varphi_b$ that updates each body's node features given the input node features of all body and graph connectivity:
\begin{align}
\label{eq:somp1}
    \{(\vec{\mZ}'_k, \vh'_k)\}_{k=1}^{K_i} &= \varphi_b\left(\{(\vec{\mZ}_k,\vh_k)\}_{k=1}^{K_i}, \gE_i \right).
\end{align}
Notably, the unfolding of $\varphi_b$ is similar to that of $\varphi_o$, and will not be elaborated further here.

For each agent \( i \), the invariant actor policy \( \pi_{\theta_i} \) is defined as
\begin{align}
    \pi_{\theta_i} = \{\pi_{\theta_{i,k}}\}_{k=2}^{K_i} = \{\sigma_{\pi}(\mO_i^{\top}\vec{\mZ}'_k, \vh'_k)\}_{k=2}^{K_i},
\end{align}
where \( \sigma_{\pi} \) is a linear layer with bias. Here, \( \pi_{\theta_i} \in \sR^{2 \times (K_i-1)} \) represents the location and scale parameters of a Normal Tanh Distribution for the \( (K_i-1) \) actuators of agent \( i \). Each actuator samples its corresponding torque \( a_{i,k} \in [-1,1] \) from this distribution.
The invariant critic value-function \( V_{\phi} \) remains as described in the main text.
Within the same team, different agents share the weight parameters of the body-level message passing.

\begin{figure}[htpb!]
\centering
\includegraphics[width=0.5\linewidth]{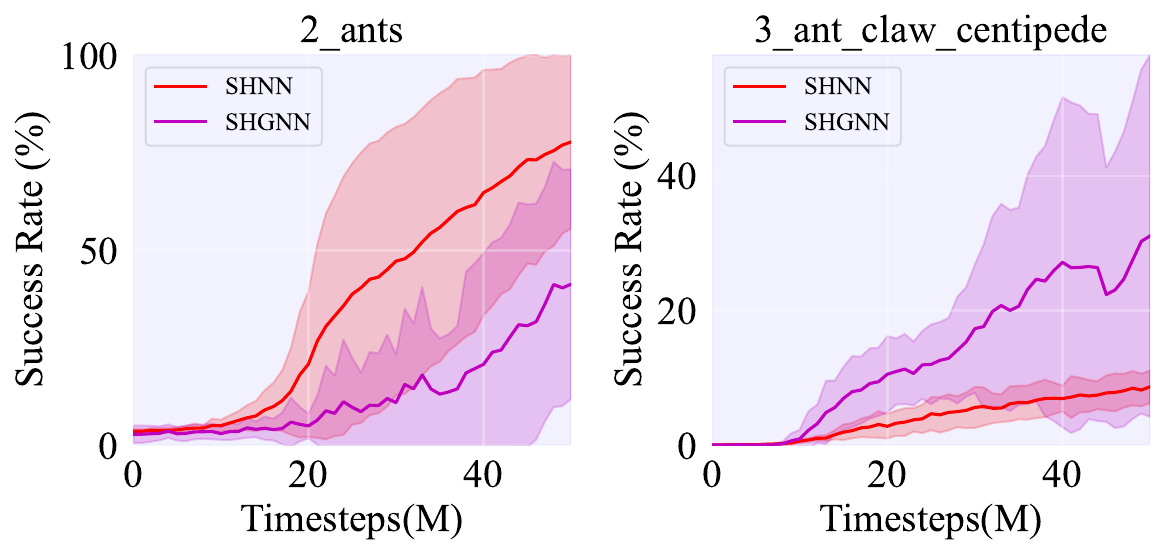}
\vspace{-.12in}
\caption{Analysis of Morphology-shared Policy. Training and Evaluation Curves in Team Reach Environments. 
}
\label{fig:low_graph}
\vspace{-.13in}
\end{figure}
As observed in \cref{fig:low_graph}, while body-level message passing struggles to learn effective control strategies for a single morphology (e.g., \texttt{2\_ants}), it interestingly excels in more complex, multi-morphological environments like \texttt{3\_ant\_claw\_centipede}. In such scenarios, knowledge sharing across different morphologies leads to mutual enhancement, significantly outperforming the body-level MLP approach. However, as indicated in~\cref{tab:model_comparison}, the implementation of message passing considerably slows down the training process.

\subsection{Model Comparison: Parameters and Training Time}

\cref{tab:model_comparison} compares the parameters and Training Wall Times of our model with several model variants in the \texttt{3\_ant\_claw\_centipede} Team Reach Environments.
Here, total timesteps are 50M. 
Since Transformer and GNN can share parameters across different morphologies, they have fewer parameters.

\begin{table}[ht]
\centering
\caption{Comparison of Models in Terms of Parameters and Training Wall Time in \texttt{3\_ant\_claw\_centipede} Team Reach Environments.  }
\label{tab:model_comparison}
\begin{tabular}{lcc}
\toprule
\textbf{Model} & \textbf{Parameters} (M) & \textbf{Training Wall Time} (h) \\
\midrule
MLP+HN & \textbf{1.759} & 0.299$\pm$0.003 \\
Transformer+HN & 0.416 & 1.545$\pm${0.001} \\
SHNN & 0.772 & 1.302$\pm${0.012} \\
SHTransformer & 0.711 &  2.487$\pm${0.014} \\
SHGNN & 0.613 &  \textbf{5.538}$\pm${1.411} \\
\bottomrule
\end{tabular}
\end{table}

\subsection{Equivariance Test}\label{sec:eq_test}
We conduct an experiment, as depicted in \cref{fig:eq_test_demo}, to evaluate the rotational generalization of both the baselines and our method.
Training and evaluation are conducted in the fixed initial conditions of the \texttt{1\_centipede} and \texttt{2\_ants} Team Reach environments. Additionally, we conduct an evaluation with a 180° rotation of the entire scene. 
The results presented in \cref{tab:eq_test}, with detailed curves provided in \cref{fig:eq_test}, illustrate that both our SHNN method and the MLP+HN approach exhibit stable performance pre- and post-rotation.
Conversely, MLP demonstrates rotational generalization for symmetric morphologies, such as ants, yet entirely lacks this capability with asymmetric morphologies like centipedes, evidenced by the underline in \cref{tab:eq_test} and the green line in the first plot of \cref{fig:eq_test}. 
These results empirically validate that both our SHNN method and the HN approach are rotation equivariance, which can robustly generalize to unseen rotation transformations.

\begin{figure}[ht]
\centering
\includegraphics[width=0.7\linewidth]{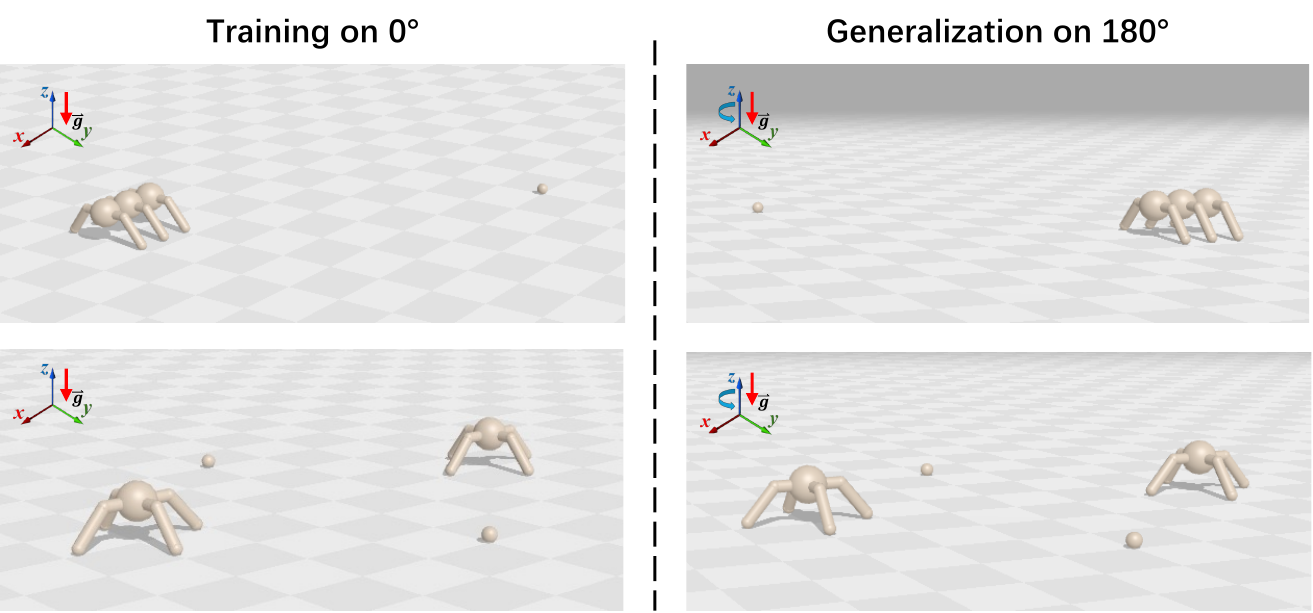}
\caption{Equivariance Test Scenarios. Training are conducted in the fixed initial conditions, a generalization evaluation is conducted with a 180° rotation of the entire scene.}
\label{fig:eq_test_demo}
\vspace{-.13in}
\end{figure}

\begin{table}[ht]
    \centering
    \caption{Equivariance Test. We report Success Rate (\%) on the final step in Team Reach Environments.  }
    \normalsize
    \label{tab:eq_test}
    \begin{tabular}{ccccc}
    \toprule
     \multirow{2}{*}{\textbf{Methods}} & \multicolumn{2}{c}{\texttt{1\_centipede}} & \multicolumn{2}{c}{\texttt{2\_ants}} \\
     & $0^\circ$ & $180^\circ$ & $0^\circ$ & $180^\circ$ \\
     \midrule
     MLP & $\textbf{41.55}\pm19.70$ & \underline{$0.18\pm0.22$} & $44.00\pm12.90$ & 
     \underline{$40.03\pm12.03$} \\
     MLP+HN & $38.02\pm24.98$ & $37.92\pm25.91$ & $39.19\pm12.85$ & $38.60\pm12.53$ \\
     SHNN & $40.23\pm22.28$ & $\textbf{41.39}\pm21.89$ & $\textbf{87.78}\pm6.59$ & $\textbf{87.84}\pm6.42$ \\
    \bottomrule
    \end{tabular}
    \vspace{-.1in}
\end{table}

\begin{figure}[ht]
\centering
\includegraphics[width=0.7\linewidth]{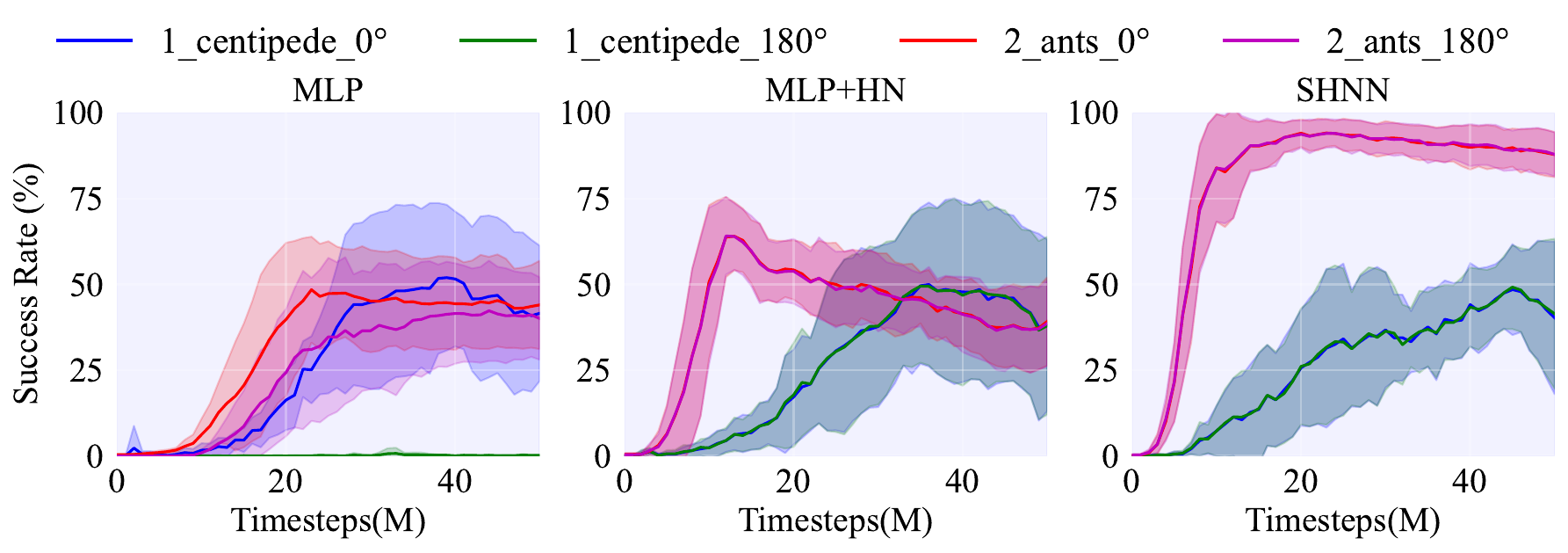}
\vspace{-.13in}
\caption{Equivariance Test. Training and Evaluation Curves in Team Reach Environments.
MLP+HN and SHNN, by explicitly addressing local transformations, exhibit overlapping curves in equivariance tests.
}
\label{fig:eq_test}
\vspace{-.13in}
\end{figure}

\subsection{Hyperparameters}\label{sec:imple}
\label{sec:hypers} 
\cref{tab:mappohyp} and  \cref{tab:nethyp} provide the hyperparameters needed to replicate our experiments. 
Code and Environments are available on our project page: \href{https://alpc91.github.io/SMERL/}{https://alpc91.github.io/SMERL/}.

\vspace{-.2in}
\begin{table}[ht!]
\centering
\scriptsize
\begin{minipage}{.45\linewidth}
\centering
\caption{Hyperparameters of MAPPO.}
\begin{tabular}{lc}
\toprule
\textbf{Hyperparameter} & \textbf{Value} \\
\midrule
total\_timesteps & Team Reach: 50M / Team Sumo: 20M \\
eval\_frequency & 50 \\
num\_envs & 2048 \\
action\_repeat & 1 \\
batch\_size & Team Reach: 1024 / Team Sumo: 128 \\
reward\_scaling & 1.0 \\
episode\_length & 1000 \\
entropy\_cost & 1e-2 \\
unroll\_length & 5 \\
discounting & 0.97 \\
learning\_rate & 3e-4 \\
num\_minibatches & 32 \\
num\_update\_epochs  & 4 \\
gradient\_clipping  & 0.1 \\
normalize\_observations & True \\
\bottomrule
\end{tabular}
\label{tab:mappohyp}
\end{minipage}
\hfill
\begin{minipage}{.45\linewidth}
\centering
\caption{Hyperparameters of Network.}
\begin{tabular}{lcc}
\toprule
\textbf{Module} & \textbf{Hyperparameters} & \textbf{Value} \\
\midrule
\multirow{4}{*}{MLP} & hidden\_dim & 256 \\
 & output\_dim & $\pi$: $2 \times (K_i-1)$ / $V$: 1 \\
 & \# linear\_layers & 3 \\
 & activation & relu \\
\midrule
\multirow{5}{*}{Message Passing} & hidden\_dim & 64 \\
 & vector\_dim & 32 \\
 & \# MLP\_layers & 2 \\
 & activation & relu \\
 & propagation\_steps & 2 \\
 \midrule
\multirow{8}{*}{Transformer} & model\_dim & 128 \\
 & feedforward\_dim & 256 \\
 & \# layers & 3 \\
 & \# heads & 2 \\
 & activation & relu \\
 & transformer\_norm & LayerNorm \\
 & condition\_decoder & True \\
 & positional\_encoding & False \\
\bottomrule
\end{tabular}
\label{tab:nethyp}
\end{minipage}
\end{table}

\end{document}